\documentclass{article}

\PassOptionsToPackage{numbers, compress}{natbib}

\usepackage[preprint]{neurips_2026}

\usepackage[utf8]{inputenc} 
\usepackage[T1]{fontenc}    
\usepackage[colorlinks,citecolor=blue]{hyperref}       
\usepackage{url}            
\usepackage{booktabs}       
\usepackage{amsfonts}       
\usepackage{nicefrac}       
\usepackage{microtype}      
\usepackage[dvipsnames]{xcolor}         
\usepackage{amsmath}
\usepackage{amsthm}
\usepackage[linesnumbered,ruled,lined,noend]{algorithm2e}
\usepackage{mathtools}
\usepackage{subcaption}
\usepackage{amssymb}
\usepackage{bm}
\usepackage{multirow}
\usepackage{tikz}
\usepackage{wrapfig}
\usepackage{siunitx}
\usepackage{makecell}
\usetikzlibrary{shapes.geometric}

\SetKwComment{Comment}{// }{}

\DeclareMathOperator{\1}{\mathbf{1}}

\newtheorem{theorem}{Theorem}

\newtheorem{lemma}{Lemma}
\newtheorem{corollary}{Corollary}

\theoremstyle{remark}
\newtheorem{remark}{Remark}

\title{Parallelizing Counterfactual Regret Minimization}

\author{%
  Juho Kim \\
  Computer Science Department \\
  Carnegie Mellon University \\
  \texttt{juhok@cs.cmu.edu} \\
  \And
  Tuomas Sandholm \\
  Computer Science Department, CMU \\
  Strategic Machine, Inc. \\
  Strategy Robot, Inc. \\
  Optimized Markets, Inc. \\
  \texttt{sandholm@cs.cmu.edu} \\
}

\begin{document}

\maketitle

\begin{abstract}
	Parallelization has played an instrumental role in the field of artificial intelligence (AI), drastically reducing the time taken to train and evaluate large AI models.
	In contrast to its impact in the broader field of AI, applying parallelization to computational game solving is relatively unexplored, despite its great potential.
	In this paper, we parallelize the family of counterfactual regret minimization (CFR) algorithms, which were central to important breakthroughs for solving large imperfect-information games.
	We present a generalized parallelization framework, reframing CFR as a series of linear algebra operations.
	Then, existing techniques for parallelizing linear algebra operations can be applied to accelerate CFR.
	We also describe how our technique can be applied to other tabular members of the CFR family of algorithms, including the state-of-the-art, such as CFR\textsuperscript{+}, discounted CFR, and predictive variants of CFR.
	Experimentally, we show that our CFR implementation on a GPU is up to four orders of magnitude faster than Google DeepMind OpenSpiel's CFR implementations on a CPU.
\end{abstract}

\section{Introduction}

Historically, \textit{parallelization} served as a key enabler in the field of artificial intelligence (AI)~\cite{deaneteal,goyaletal,sergeevanddelbalso}, dramatically reducing the time taken for researchers to train large AI models on massive datasets by splitting the problem into smaller subproblems to be solved concurrently.
This drastic reduction in the training time also facilitated much faster hypothesis testing compared to when serial execution is used.
In hindsight, the explosive growth in the performance and capability of AI models in a myriad of real-life applications would have been impossible without parallelization.

In stark contrast to its broad impact in AI in general, parallelization is relatively unexplored in the area of computational game solving, although the need for parallelism is clearly evidenced by the fact that past landmark works on solving large games, such as Cepheus~\cite{tammelinetal2015}, Libratus~\cite{brownandsandholm2017}, and Pluribus~\cite{brownandsandholm2018}, all leveraged some form of parallelism.
Due to the sheer size of the games involved, developing these AI agents would have been infeasible without parallelization.
Additionally, there are ample reasons to apply parallelization even for more regular game-solving tasks.
Indeed, we later show that many games, even those that are commonly used as benchmarks for game-solving algorithms, provide substantial room for parallelization, using which speedup of \textbf{multiple orders of magnitude} can be achieved.
Thus, leveraging this would allow researchers in the field to obtain much faster turnaround times in testing their theoretical results, hypotheses, or hyperparameter regimes.

In computational game theory, the family of \textit{counterfactual regret minimization (CFR)} algorithms~\cite{zinkevichetal2007} has been used to develop AI agents for large-scale imperfect-information games, most notably poker~\cite{tammelinetal2015,moravciketal2017,brownandsandholm2018,brownandsandholm2019}.
Variants of CFR have since been proposed to improve its convergence rate~\cite{tammelin2014,brownandsandholm2019aaai,farinaetal2021,xuetal2024,zhangetal2026b,zhangetal2026a},
leverage Monte Carlo runouts~\cite{lanctotetal2009}, and incorporate deep learning~\cite{brownetal2019,mcaleeretal2023}.
In this paper, we propose a generalized parallelization framework, which reconsiders CFR and its variants as a series of linear algebra operations.
Then, we can apply standard techniques for parallelizing linear algebra operations to parallelize CFR and its variants.
While our framework does not necessarily increase the size of the game that can be solved, we experimentally show that it can make finding an optimal solution to these games dramatically faster.
Our experiments involving seven games of varying sizes show that our parallelized implementation on a GPU is up to about 18,889 and 3,413 times faster than Google DeepMind OpenSpiel's Python and C++ implementations on a CPU, respectively.
The speedup becomes more pronounced as the size of the game being solved grows.

\section{Notation and background}

This section defines the notation used in this paper and reviews CFR.

\subsection{Sequence-form CFR}

In this paper, we consider parallelizing the sequence-form~\cite{vonstengel,kolleretal,farinaetal2019} version of CFR, as opposed to its classical version~\cite{zinkevichetal2007}.
Note that classical CFR and sequence-form CFR are equivalent in the sense that they produce identical iterates, so parallelizing sequence-form CFR also entails indirectly parallelizing classical CFR.
We will give more details about this choice later in our discussions.

In the sequence-form version, each player runs CFR to minimize the counterfactual regret of their respective \textit{tree-form sequential decision processes (TFSDPs)}, which are derived from the game tree.
In a TFSDP, every node $p \in \mathcal{P}$ is either in the set of decision points $\mathcal{J}$, the set of observation points $\mathcal{K}$, or is an end of the decision process $\bot$.
At a decision point $j$, an available action $a \in \mathcal{A}_j$ can be applied, and $\rho(j, a)$ is used to denote the child node reached by applying $a$ at $j$.
Similarly, a signal $s \in \mathcal{S}_k$ can be observed at an observation point $k$, and the corresponding child is denoted as $\rho(k, s)$.

In every decision point $j$, a local regret minimizer $\mathfrak{R}_j$ mixes the set of available actions $\mathcal{A}_j$.
In theory, any regret minimization algorithm that operates over the probability simplex can be used as a regret minimizer.
Locally, CFR makes use of regret matching (RM)~\cite{hartandmascolell2000}, which outputs L1-normalized floored regrets as probabilities on each iteration.
Other popular choices include RM\textsuperscript{+}, used by CFR\textsuperscript{+}~\cite{tammelin2014}, discounted RM, used by discounted CFR (DCFR)~\cite{brownandsandholm2019aaai}, and predictive RM (PRM) (respectively, PRM\textsuperscript{+}), used by predictive CFR (PCFR) (respectively, PCFR\textsuperscript{+})~\cite{farinaetal2021}.

Let $\Sigma^+ = \left\{(j, a): \forall j \in \mathcal{J}, a \in \mathcal{A}_j\right\}$ be the set of non-empty sequences and $\Sigma = \Sigma^+ \cup \{\emptyset\}$ be the set of (possibly empty) sequences.
Here, we use $\emptyset$ to denote the empty sequence.
With $p_j$ the parent sequence of a decision point $j$, a sequence-form polytope can be defined as follows:
\[
	\mathcal{X} = \left\{\vec{x} \in \mathbb{R}_{\ge0}^\Sigma: \vec{x}[\emptyset] = 1, \forall j \in \mathcal{J}: \sum_{a \in \mathcal{A}_j} \vec{x}[(j, a)] = \vec{x}[p_j]\right\}.
\]
The CFR family of algorithms is a family of regret minimizers operating over the sequence-form polytope.
On each iteration $t$, CFR constructs a behavior strategy $\vec{b}_j^{(t)} \in \Delta(\mathcal{A}_j)\ \forall j \in \mathcal{J}$ from the outputs of its local regret minimizers, which is then converted into a sequence-form strategy $\vec{x}^{(t)} \in \mathcal{X}$ prior to being returned.
We denote this procedure by $\textsc{NextStrategy}$.
Then, CFR observes utility $\vec{u}^{(t)} \in \mathbb{R}^{\Sigma}$ which is then converted into counterfactual utilities $\vec{u}_j^{(t)} \in \mathbb{R}^{\mathcal{A}_j}\ \forall j \in \mathcal{J}$ to be passed to its local regret minimizers.
Henceforth, we denote this procedure by $\textsc{ObserveUtility}$.
A complete pseudocode description of CFR in sequence-form is provided in Algorithm~\ref{alg:cfr}.
In Line~\ref{line:scalar-division}, we assume that scalar division ($/$) produces an arbitrary strategy when $\left\lVert\left(\vec{r}_j^{(t - 1)}\right)^+\right\rVert_1 = 0$.

The per-iteration time complexity of a player's instance of sequence-form CFR is linear with respect to the size of the player's corresponding TFSDP.
This differs from the per-iteration time complexity of classical CFR~\cite{zinkevichetal2007}: linear with respect to the size of the game tree.
The number of nodes in the game tree is often, but not always, greater than the total number of nodes across its player TFSDPs, so the tree-traversal parts of sequence-form CFR tend to be more efficient than classical CFR.

That said, unlike classical CFR, sequence-form CFR requires an extra step of calculating each player's utility on each iteration, whose worst-case time complexity is linear with respect to the size of the game tree.
However, a sparse matrix-vector multiplication can be carried out in two-player settings for this purpose (or tensor operations in multiplayer settings), which has a small constant factor in practice and can be easily parallelized.
As a result, in practice, sequence-form CFR tends to yield performance improvements over the classical implementation of CFR.

\subsection{Parallelized game solvers}

Unpublished works by \citet{reis2015} and by \citet{rudolf2021} have parallelized CFR on CUDA and found orders of magnitude speedups.
However, in Rudolf's work, threads are assigned to every node, which then moves up the tree; this, in the worst case, results in \textbf{quadratic} work.
This contrasts with the linear time complexity of CFR.
Reis's work has several reproducibility issues (\textit{e.g.}, an inaccessible codebase and code snippets containing syntax errors).
Both implementations require each thread to perform a ``large number of control flow statements''~\cite{reis2015} and \textbf{generalized} kernel instructions.

Instead, we reframe CFR as linear algebra operations in our framework.
Parallelizing linear algebra operations is a well-studied problem in the field of systems, and one can frequently take advantage of optimized opcodes.
As it is platform-independent, our approach is also more general than the aforementioned approaches.
Additionally, unlike Reis's, our implementation is open-source and compatible with games in OpenSpiel~\cite{lanctotetal2020}, a popular software library for game-theoretic primitives.

Another notable game-solving algorithm is the adaptation of the excessive gap technique (EGT)~\cite{nesterov} for solving extensive-form games~\cite{gilpinetal}.
\citet{hodaetal} reduce the memory footprint to a square root of the original, while~\citet{gilpinandsandholm2010} introduce sampling for speedup.
Both parallelize the matrix-vector product during the expected utility calculation to achieve speedups (this is akin to calculating the utilities passed to sequence-form CFR).
Further, \citet{kroeretal} introduce GPU implementations of the EGT, but they exploit the unique topology of poker TFSDPs by only parallelizing across the immediate children of the root node in the TFSDP, which can potentially \textbf{limit} parallelism in general, particularly when the root has few children.

Parallelization was instrumental in several imperfect-information game-solving breakthroughs, including Libratus~\cite{brownandsandholm2018} and Cepheus~\cite{tammelinetal2015} -- superhuman or nearly optimal heads-up Texas hold'em agents.
We first discuss the architecture used in Libratus.
During its blueprint strategy computation, Libratus's workload was distributed (typically) over 1 + 195 compute nodes, depending on the card rollouts.
However, the variant of CFR used by Libratus, Monte Carlo CFR (MCCFR)~\cite{lanctotetal2009}, is a stochastic regret minimizer~\cite{farinaetal2020}.
As it operates under the setting of \textbf{stochastic} regret minimization, its parallelization scheme cannot be readily adapted for use in CFR or its tabular variants.

In contrast, the engineering of Cepheus involved parallelizing CFR\textsuperscript{+}, which is tabular.
During game solving, Cepheus split the game of heads-up limit Texas Hold'em after the second betting round into a trunk and subgames.
Then, the subtrees were assigned to a unique compute node.
On each iteration, the trunk node sent probabilities to the subtree compute nodes, which, in turn, returned values back to the trunk node so it could finish its calculations.
Here, the subtree nodes wait for the trunk node to send probabilities, and the trunk node hangs until the subtree computations are finished.
Our parallelization scheme is different from that of Cepheus primarily in that a) we effectively split the game at every depth, potentially yielding higher parallelism, and b) we offer a domain-independent parallelization framework.
Most importantly, our contribution is \textbf{not mutually exclusive} with Cepheus's architecture: our technique can be augmented to Cepheus for further parallelism.

\section{Parallelizing CFR and its tabular variants}

\begin{figure}[t!]
	\hfill
	\begin{minipage}{0.48\textwidth}
		\footnotesize
		\begin{algorithm}[H]
			\SetInd{0.3em}{0.5em}
			\caption{CFR~\cite{zinkevichetal2007,farinaetal2019}}
			\label{alg:cfr}
			\textbf{Input} a TFSDP.
			\BlankLine
			\textsc{NextStrategy}$()$: \\
			\Indp{
				\Comment{Calculate behavioral strategy}
				\For{each decision point $j \in \mathcal{J}$}{
					$\Delta(\mathcal{A}_j) \ni \vec{b}_j^{(t)} \coloneqq \left(\vec{r}_j^{(t - 1)}\right)^+ / \left\lVert\left(\vec{r}_j^{(t - 1)}\right)^+\right\rVert_1$\; \label{line:scalar-division}
				}
				\Comment{Convert to sequence-form}
				$\vec{x}^{(t)}[\emptyset] \coloneqq 1$\;
				\For{each decision point $j \in \mathcal{J}$, top-down}{
					\For{each action $a \in \mathcal{A}_j$}{
						$\vec{x}^{(t)}[(j, a)] \coloneqq \vec{x}^{(t)}[p_j] \cdot \vec{b}_j^{(t)}[a]$\;
					}
				}
				\Return $\vec{x}^{(t)} \in \mathcal{X}$\;
			}
			\Indm
			\BlankLine
			\textsc{ObserveUtility}$\left(\vec{u}^{(t)} \in \mathbb{R}^{\Sigma}\right)$: \\
			\Indp{
				\Comment{Memoize counterfactual utilities}
				$\vec{v}^{(t)} \coloneqq \vec{0} \in \mathbb{R}^\mathcal{P}$\;
				\For{each node $p \in \mathcal{P}$, bottom-up}{
					\If{$\mathcal{J} \ni j \coloneqq p$}{
						$\vec{v}^{(t)}[j] \coloneqq \sum_{a \in \mathcal{A}_j} b_j^{(t)}[a] \cdot (\vec{u}^{(t)}[(j, a)] + \vec{v}^{(t)}[\rho(j, a)])$\;
					}
					\ElseIf{$\mathcal{K} \ni k \coloneqq p$}{
						$\vec{v}^{(t)}[k] \coloneqq \sum_{s \in \mathcal{S}_k} \vec{v}^{(t)}[\rho(k, s)]$\;
					}
				}
				\Comment{Update counterfactual regrets}
				\For{each decision point $j \in \mathcal{J}$}{
					\For{each action $a \in \mathcal{A}_j$}{
						$\vec{u}_j^{(t)}[a] \coloneqq \vec{u}^{(t)}[(j, a)] + \vec{v}^{(t)}[\rho(j, a)]$\;
					}
					$\mathbb{R}^{\mathcal{A}_j} \ni \vec{r}_j^{(t)} \coloneqq \vec{r}_j^{(t - 1)} + \vec{u}_j^{(t)} - \left(\vec{b}_j^{(t)}\right)^\top \vec{u}_j^{(t)}$\;
				}
			}
			\Indm
		\end{algorithm}
	\end{minipage}
	\hfill
	\begin{minipage}{0.48\textwidth}
		\footnotesize
		\begin{algorithm}[H]
			\SetInd{0.3em}{0.5em}
			\caption{Parallelized CFR (ours)}
			\label{alg:parallelized-cfr}
			\textbf{Input:} A TFSDP.
			\BlankLine
			\textsc{NextStrategy}$()$: \label{line:next-strategy} \\
			\Indp{
				\Comment{Calculate behavioral strategy}
				$\mathbb{R}_{\ge 0}^{\Sigma^+} \ni \vec{b}^{(t)} \coloneqq \left(\vec{r}^{(t - 1)}\right)^+ \oslash \left(\mathbf{C}^\top\left(\mathbf{C}\,\left(\vec{r}^{(t - 1)}\right)^+\right)\right)$\; \label{line:hadamard-division} \label{line:behavioral-strategy-calculation}
				\Comment{Convert to sequence-form}
				$\vec{y}^{(t)} \coloneqq \vec{e}_{p^*} \in \mathbb{R}_{\ge 0}^\mathcal{P}$\; \label{line:unit-vector}
				\For{each level $\mathbf{L}^{(d)}$ with weights $\vec{b}^{(t)}$, top-down}{ \label{line:top-down-begin}
					$\vec{y}^{(t)} \coloneqq \vec{y}^{(t)} + \left(\mathbf{L}^{(d)}\right)^\top \vec{y}^{(t)}$\; \label{line:top-down-end}
				}
				$\mathcal{X} \ni \vec{x}^{(t)} = \mathbf{A}^\top \vec{y}^{(t)}$\; \label{line:strategy-conversion}
				\Return $\vec{x}^{(t)}$\; \label{line:return}
			}
			\Indm
			\BlankLine
			\textsc{ObserveUtility}$\left(\vec{u}^{(t)} \in \mathbb{R}^{\Sigma}\right)$: \label{line:observe-utility} \\
			\Indp{
				\Comment{Memoize counterfactual utilities}
				$\vec{v}^{(t)} \coloneqq \vec{0} \in \mathbb{R}^\mathcal{P}$\; \label{line:zeros}
				$\vec{w}^{(t)} \coloneqq \mathbf{A}\,\vec{u}^{(t)} \in \mathbb{R}^\mathcal{P}$\; \label{line:utility-conversion}
				\For{each level $\mathbf{L}^{(d)}$ with weights $\vec{b}^{(t)}$, bottom-up}{ \label{line:bottom-up-begin}
					$\vec{v}^{(t)} \coloneqq \vec{v}^{(t)} + \mathbf{L}^{(d)}\,\vec{w}^{(t)} + \mathbf{L}^{(d)}\,\vec{v}^{(t)}$\; \label{line:bottom-up-end}
				}
				\Comment{Update counterfactual regrets}
				$\mathbb{R}^{\Sigma^+} \ni \vec{q}^{(t)} \coloneqq \mathbf{B}^\top\left(\vec{w}^{(t)} + \vec{v}^{(t)}\right)$\; \label{line:combination}
				$\mathbb{R}^{\Sigma^+} \ni \vec{r}^{(t)} \coloneqq \vec{r}^{(t - 1)} + \vec{q}^{(t)} - \mathbf{C}^\top\left(\mathbf{C}\left(\vec{b}^{(t)} \odot \vec{q}^{(t)}\right)\right)$\; \label{line:hadamard-product} \label{line:regret-update}
			}
			\Indm
			\vspace{4.55em}
		\end{algorithm}
	\end{minipage}
\end{figure}

We are now ready to present our results.
As previously alluded to, we parallelize CFR by rewriting it in linear algebra, after which we can apply standard parallelization techniques for linear algebra operations.
Later, we demonstrate that our parallelization framework can easily be applied to other mainstream tabular variants of CFR, such as CFR\textsuperscript{+}~\cite{tammelin2014}, DCFR~\cite{brownandsandholm2019aaai}, PCFR, and PCFR\textsuperscript{+}~\cite{farinaetal2021}.
We begin this section with an overview of our technique.

\subsection{Conversion via logic matrices}

Sequence-form CFR, whose pseudocode is shown in Algorithm~\ref{alg:cfr}, involves vectors whose values are associated with either sequences (\textit{i.e.}, in $\mathbb{R}^\Sigma$) or nodes (\textit{i.e.}, in $\mathbb{R}^{\mathcal{P}}$).
For example, at any time $t$, an output sequence-form strategy $\vec{x}^{(t)} \in \mathcal{X} \subset \mathbb{R}^\Sigma_{\ge 0}$ and observed utility $\vec{u}^{(t)} \in \mathbb{R}^\Sigma$ associates each of their element with a sequence $(j, a) \in \Sigma$.
Similarly, in the $\textsc{ObserveUtility}$ function, memoized counterfactual utilities $\vec{v} \in \mathbb{R}^\mathcal{P}$ associate each element with a node $p \in \mathcal{P}$.
As we plan on using these values in linear algebra operations, CFR presents a need to reconcile values in different `representations'.
We do so by converting vectors between $\mathbb{R}^\Sigma$ and $\mathbb{R}^{\mathcal{P}}$ using logic matrices.

Logic matrices are matrices whose entries are in the Boolean domain $\{0, 1\}$.
We show how a logic matrix can be used to convert vectors between a per-sequence representation (\textit{i.e.}, $\mathbb{R}^\Sigma$) and a per-node representation (\textit{i.e.}, $\mathbb{R}^\mathcal{P}$).
By this, we mean to associate with $\rho(j, a)$ a value previously associated with a sequence $(j, a)$ and vice versa.
Note that $\rho(\cdot)$ maps an empty sequence $\emptyset$ to the root node.
We denote this logic matrix by $\mathbf{A} \in \mathbb{R}^{\mathcal{P} \times \Sigma}$ and define each of its element as follows: $A_{p,(j,a)} = \1_{p = \rho(j,a)}$.
Here, $\1_{(\cdot)}$ is an indicator function, which outputs $1$ when the condition in brackets is true or $0$ if otherwise.
Then, we can trivially convert between a vector $\vec{z} \in \mathbb{R}^{\mathcal{P}}$ and its per-sequence representation $\vec{z}\,' \in \mathbb{R}^{\Sigma}$ by calculating $\vec{z} = \mathbf{A}\,\vec{z}\,'$.
We also have $\vec{z}\,' = \mathbf{A}^\top\vec{z}$.
We further remark that $nnz(\mathbf{A}) = |\Sigma|$, where $nnz(\cdot)$ is a function that returns the number of non-zero elements, and each row or column of $\mathbf{A}$ has at most a single non-zero value.
These are later used in our complexity analysis.

Additionally, by definition, a behavioral strategy $\vec{b}^{(t)}$ at any time $t$ can be represented with a vector in $\mathbb{R}^{\Sigma^+}$.
Recall that $\Sigma^+$ is the set of non-empty sequences and contains all pairs of decision points and available actions.
Henceforth, we refer to vectors in $\mathbb{R}^{\Sigma^+}$ as being in the behavioral representation.
Behaviorally-represented vectors have exactly one less element than those in a per-sequence representation: the empty sequence $\emptyset$ is no longer associated with a value.
The behavioral representation is also sufficient to capture cumulative regrets $\vec{r}^{(t)}$.
As done previously, we can define the following logic matrix $\mathbf{B} = \mathbf{A}_{:,2:} \in \mathbb{R}^{\mathcal{P} \times \Sigma^+}$ (assuming the first column in $\mathbf{A}$ corresponds to $\emptyset$).
With this, we can translate from $\vec{z} \in \mathbb{R}^{\mathcal{P}}$ to its behavioral representation $\vec{z}\,'' \in \mathbb{R}^{\Sigma^+}$ by calculating $\vec{z}\,'' = \mathbf{B}^\top\,\vec{z}$.
Note that $nnz(\mathbf{B}) = |\Sigma^+|$ and each row/column of $\mathbf{B}$ has at most a single non-zero.

Next, we show how a logic matrix can be used to combine values associated with available actions in each decision point.
This ability is crucial in calculating behavioral strategies $\vec{b}^{(t)}$ and cumulative regrets $\vec{r}^{(t)}$ at any $t$.
For example, to calculate the behavioral strategy in $\textsc{NextStrategy}$, the floored counterfactual regrets must be L1 normalized action-wise.
This entails obtaining the sums of floored counterfactual regrets associated with available actions in each decision point.
For this purpose, we define $\mathbf{C} \in \mathbb{R}^{\mathcal{J} \times \Sigma^+}$ where $C_{j,(j',a)} = \1_{j=j'}$.
Then, given $\vec{z}\,'' \in \mathbb{R}^{\Sigma^+}$, one can obtain the action-wise sums by calculating $\mathbf{C}\,\vec{z}\,''$.
One can also propagate the collapsed values back with $\mathbf{C}^\top\mathbf{C}\,\vec{z}\,''$.
Then, $\vec{z}\,''$ can be L1-normalized action-wise by calculating $\vec{z}\,'' \oslash \left(\mathbf{C}^\top\mathbf{C}\,\vec{z}\,''\right)$ .
Here, `$\oslash$' denotes the Hadamard division operator, which carries out element-wise division.
Note that $nnz(\mathbf{C}) = |\Sigma^+|$ and each row and column of $\mathbf{C}$ has at most $\max_{j \in \mathcal{J}}{|\mathcal{A}_j|}$ and $1$ non-zeros, respectively.

\subsection{Parallelized tree traversal}

Notably, in sequence-form CFR, both \textsc{NextStrategy} and \textsc{ObserveUtility} require complete tree traversals over the input TFSDP, top-down for the former and bottom-up for the latter.
As is our goal, we convert the tree traversal step in the original algorithm into linear algebra operations.
For this, we take inspiration from GraphBLAS~\cite{kepneretal2016}, whose core insight is that multiplying a graph's adjacency matrix with a vector is akin to a single step in breadth-first search.

TFSDPs can be decomposed into level structures.
Define an adjacency matrix $\mathbf{L}^{(d)} \in \mathbb{R}^{\mathcal{P} \times \mathcal{P}}$ for each depth $d$.
A non-zero element $L^{(d)}_{p_1,p_2}$ denotes that there is an edge of that weight connecting $p_1$ at depth $d - 1$ and $p_2$ at depth $d$.
With these level graphs, a top-down tree traversal can be achieved by applying the level graphs in the following order: $\mathbf{L}^{(1)}, \mathbf{L}^{(2)}, \hdots, \mathbf{L}^{(k)}$ where $k$ is the height of the tree.
Applying them in the reverse order results in a bottom-up traversal of the tree.
The number of non-zeros in $L^{(d)}$ is equal to the number of edges in the corresponding level structure.
Similarly, the maximum number of entries in any row in $L^{(d)}$ is equal to the size of the largest set of nodes in depth $d$ that shares the same parent node.
The maximum number of entries in any column is equal to $1$.

\subsection{Algorithm description}

Parallelized CFR, combining the above insights, is presented as Algorithm~\ref{alg:parallelized-cfr}.
In Line~\ref{line:hadamard-division}, we assume that the Hadamard division operator `$\oslash$' produces an arbitrary strategy for $j$ when $\left\lVert\left(\vec{r}_j^{(t - 1)}\right)^+\right\rVert_1 = 0$.
In Line~\ref{line:hadamard-product}, we use `$\odot$' to denote the Hadamard product operator, which carries out element-wise multiplication.
We ensured that the comments provided inside the pseudocode match those provided in the non-parallelized CFR's pseudocode in Algorithm~\ref{alg:cfr} so they can be compared easily.

The \textsc{NextStrategy} function, which returns a sequence-form strategy for iteration $t$, is defined in Line~\ref{line:next-strategy}.
In Line~\ref{line:behavioral-strategy-calculation}, we L1-normalize the floored counterfactual regrets $\left(\vec{r}^{(t)}\right)^+$ at time $t$ to produce the behavioral strategy $\vec{b}^{(t)}$.
The logic matrix $\mathbf{C}$ is used during this step.

In the following lines, the behavioral strategy $\vec{b}^{(t)}$ is converted to a sequence-form strategy (\textit{i.e.}, in $\mathcal{X}$).
To do this, we first define $\vec{y}^{(t)} = \vec{e}_{p^*} \in \mathbb{R}^{\mathcal{P}}_{\ge 0}$ in Line~\ref{line:unit-vector}.
Here, $p^*$ denotes the root of the input TFSDP, and $\vec{e}_p$ is used to denote a unit vector whose element corresponding to $p \in \mathcal{P}$ is set to one while the rest is set to zero.
Then, a top-down traversal is performed between Lines~\ref{line:top-down-begin} and~\ref{line:top-down-end} to propagate the probabilities from $\vec{b}^{(t)}$ down the tree.
Here, the edge weights in the adjacency matrix of the level graph are defined depending on the type of edge.
When the edge corresponds to applying an action $a$ at a decision point $j$, the weight is set to $\vec{b}^{(t)}_{(j,a)}$.
Otherwise, the edge corresponds to a signal and its weight is set to $1$.
After the tree traversal, $\vec{x}^{(t)} \in \mathcal{X}$ is calculated from $\vec{y}^{(t)}$ in Line~\ref{line:strategy-conversion} by converting it from the per-node representation to the per-sequence representation.
The logic matrix $\mathbf{A}$ is used for this step.
Finally, $\vec{x}^{(t)}$ is returned in Line~\ref{line:return}.

\textsc{ObserveUtility}, which, given utilities, updates the cumulative counterfactual regrets, is in Line~\ref{line:observe-utility}.
In Line~\ref{line:zeros}, $\vec{v}^{(t)}$ is initialized with zeros.
Then, in Line~\ref{line:utility-conversion}, the input utility $\vec{u}^{(t)}$ is converted from a per-sequence representation to a per-node representation, and is denoted as $\vec{w}^{(t)}$.
The logic matrix $\mathbf{A}$ is used during this step.
From Line~\ref{line:bottom-up-begin} to Line~\ref{line:bottom-up-end}, a bottom-up tree traversal is carried out to accumulate the counterfactual utilities $\vec{u}^{(t)}$ up the tree.
The accumulated values are stored in $\vec{v}^{(t)}$.

Next, $\vec{v}^{(t)}$ is used to update the counterfactual regrets.
Line~\ref{line:combination} combines $\vec{v}^{(t)}$ and $\vec{w}^{(t)}$, converts the result from a per-node representation to a behavioral representation using $\mathbf{B}$, and denotes it as $\vec{q}^{(t)}$.
It is then used in Line~\ref{line:regret-update} to update the counterfactual regrets.
Updating the cumulative counterfactual regrets entails adding the counterfactual values, which, in turn, requires calculating the expected values at each decision point.
These are action-wise sums of the counterfactual values $\vec{q}^{(t)}$, weighted by the probabilities of applying the corresponding actions, given by the behavioral strategy $\vec{b}^{(t)}$.
We can collapse the Hadamard product $\vec{b}^{(t)} \odot \vec{q}^{(t)}$ using $\mathbf{C}$ to obtain the expected values at each decision point.
Just as we did in Line~\ref{line:hadamard-division}, the collapsed values are propagated back to obtain the counterfactual values $\vec{q}^{(t)} - \mathbf{C}^\top\left(\mathbf{C}\left(\vec{b}^{(t)} \odot \vec{q}^{(t)}\right)\right)$, which is then added to the counterfactual regrets.

\subsection{Complexity analysis}
\label{sec:complexity-analysis}

We use the \textit{work-depth model} in our complexity analysis.
The work ($W$) refers to the total number of operations performed by the algorithm, while the depth ($D$) refers to the length of the longest span of operations in sequence.
We use \textit{compressed sparse row (CSR)} sparse matrix operations in our implementations.
Sparse matrix-vector multiplication has total work linear in the number of non-zeros in the matrix and depth proportional to the maximum number of non-zeros in a row~\cite{blelloch}.
All other linear algebra operations used in Algorithm~\ref{alg:parallelized-cfr} -- vector addition, vector subtraction, the Hadamard division, and the Hadamard product -- have linear work and constant depth.
With this, we have the following lemma for the complexity of the \textsc{NextStrategy} function.

\begin{lemma}
	\label{lmm:next-strategy-complexity}
	The \textsc{NextStrategy} function in Algorithm~\ref{alg:parallelized-cfr} has the total work of $W = \Theta(|\mathcal{P}|)$ and the depth of $D = \Theta(\log{(\max_{j \in \mathcal{J}}{|\mathcal{A}_j|})} + k)$.
\end{lemma}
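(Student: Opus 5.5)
We go through \textsc{NextStrategy} in Algorithm~\ref{alg:parallelized-cfr} line by line, charging each line according to the cost model of Section~\ref{sec:complexity-analysis}. A CSR sparse matrix-vector product costs $\Theta(nnz)$ work. Its depth is governed by the longest row, which must be summed; we implement that sum as a balanced reduction tree, so a row with $m$ non-zeros contributes $\Theta(\log m)$ depth, or $\Theta(1)$ when $m\le 1$. Element-wise operations cost linear work and $\Theta(1)$ depth. Throughout we use that $|\Sigma^+| < |\Sigma| \le |\mathcal{P}|$, since every sequence $(j,a)$ is identified with the distinct node $\rho(j,a)$, and $\emptyset$ with the root. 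We also use $|\mathcal{J}|\le|\mathcal{P}|$.

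\textbf{Line~\ref{line:behavioral-strategy-calculation}.} Computing $(\vec{r}^{(t-1)})^+$ is element-wise. The product $\mathbf{C}\,(\cdot)$ has $nnz(\mathbf{C})=|\Sigma^+|$ and row length at most $\max_j|\mathcal{A}_j|$, so it costs $\Theta(|\Sigma^+|)$ work and $\Theta(\log \max_j|\mathcal{A}_j|)$ depth. The product with $\mathbf{C}^\top$ has rows with a single non-zero, so it costs $\Theta(|\Sigma^+|)$ work and $\Theta(1)$ depth. The Hadamard division is element-wise. This line therefore contributes $O(|\mathcal{P}|)$ work and $\Theta(\log\max_j|\mathcal{A}_j|)$ depth.

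\textbf{Lines~\ref{line:unit-vector}--\ref{line:return}.} Initializing $\vec{e}_{p^*}$ costs $\Theta(|\mathcal{P}|)$ work and $\Theta(1)$ depth. The loop runs over the $k$ levels. At level $d$, filling in the edge weights of $\mathbf{L}^{(d)}$ from $\vec{b}^{(t)}$ is an element-wise gather. Computing $(\mathbf{L}^{(d)})^\top\vec{y}$ is the main subtlety: each row of $(\mathbf{L}^{(d)})^\top$ corresponds to a column of $\mathbf{L}^{(d)}$ and so has at most one non-zero, because each node has one parent. Each step thus has $\Theta(1)$ depth, and the vector addition also has $\Theta(1)$ depth. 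The work of each step is $O(nnz(\mathbf{L}^{(d)}))$ for the product plus the vector addition.

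To keep the total linear rather than $\Theta(k|\mathcal{P}|)$, we charge each addition only to the support of $(\mathbf{L}^{(d)})^\top\vec{y}$, i.e., the depth-$d$ nodes. Equivalently, we do a sparse scatter-add, or store $\vec{y}$ level by level. Since the edge counts satisfy $\sum_d nnz(\mathbf{L}^{(d)}) = |\mathcal{P}|-1$, the loop has $\Theta(|\mathcal{P}|)$ work and $\Theta(k)$ depth. Finally, $\mathbf{A}^\top\vec{y}$ has $|\Sigma|$ non-zeros and at most one per row, giving $\Theta(|\mathcal{P}|)$ work and $\Theta(1)$ depth.

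\textbf{Totals and lower bounds.} Summing gives $W=O(|\mathcal{P}|)$ and $D=O(\log\max_j|\mathcal{A}_j| + k)$. For matching lower bounds: initializing $\vec{y}$ alone touches $|\mathcal{P}|$ entries, so $W=\Omega(|\mathcal{P}|)$. The $k$ level updates are sequentially dependent, because level $d$ reads the entries written at level $d-1$, so $D=\Omega(k)$. The reduction over a row of $\mathbf{C}$ with $\max_j|\mathcal{A}_j|$ entries requires depth $\Omega(\log\max_j|\mathcal{A}_j|)$ with binary operations. Hence both bounds are $\Theta$.

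The step I expect to need the most care is the work of the traversal loop. A naive dense vector addition over all of $\mathcal{P}$ at each of the $k$ levels would cost $\Theta(k|\mathcal{P}|)$. I would resolve this by interpreting the update as a sparse accumulate over the non-zeros of $\mathbf{L}^{(d)}$, consistent with the paper's CSR model, and then telescoping the edge counts across levels.
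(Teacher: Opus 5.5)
Your proposal is correct and follows essentially the same route as the paper's proof. Both go line by line using CSR sparse matrix-vector costs: the $\mathbf{C}$ product gives the $\Theta(\log\max_{j}|\mathcal{A}_j|)$ depth, each level of the top-down loop has $\Theta(1)$ depth, and the in-place additions are charged only to the newly written depth-$d$ entries so that edge counts telescope to $\Theta(|\mathcal{P}|)$ work. The only difference is that you make the matching lower bounds explicit (the dense initialization of $\vec{y}^{(t)}$, the sequential dependence across levels, and the binary reduction over a row of $\mathbf{C}$), whereas the paper simply asserts $\Theta$ bounds for each line.
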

Recall that $k$ is the height of the TFSDP.
Due to space constraints, the proof is relegated to Appendix~\ref{sec:next-strategy-complexity}.
Now, we provide the complexity of the \textsc{ObserveUtility} function in Lemma~\ref{lmm:observe-utility-complexity}.
Defining $B$ as the degree of the TFSDP, \textit{i.e.}, the maximum number of children of any node,

\begin{lemma}
	\label{lmm:observe-utility-complexity}
	The \textsc{ObserveUtility} function in Algorithm~\ref{alg:parallelized-cfr} has the total work of $W = \Theta(|\mathcal{P}|)$ and the depth of $D = \Theta(k\log{B})$.
\end{lemma}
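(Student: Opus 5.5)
I will go through \textsc{ObserveUtility} in Algorithm~\ref{alg:parallelized-cfr} line by line. For each line I assign work and depth using the cost model of Section~\ref{sec:complexity-analysis}. A CSR sparse matrix-vector product costs work linear in its number of non-zeros. Its depth I will take as logarithmic in the maximum row length: each output entry is a sum over its row, computed by a balanced reduction tree. All element-wise vector operations cost $\Theta(\cdot)$ of the vector length in work and $O(1)$ in depth. The total work is the sum over lines. The total depth is also the sum over lines, because the lines, and in particular the iterations of the bottom-up loop, are executed in sequence.

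\textbf{Work.} Initializing $\vec{v}^{(t)}$ in Line~\ref{line:zeros} costs $\Theta(|\mathcal{P}|)$. Line~\ref{line:utility-conversion} costs $nnz(\mathbf{A}) = |\Sigma| \le |\mathcal{P}|$, plus $\Theta(|\mathcal{P}|)$ to write the output. In the loop of Lines~\ref{line:bottom-up-begin}--\ref{line:bottom-up-end}, level $d$ costs $O(nnz(\mathbf{L}^{(d)}))$ for the two products. The vector additions also touch $|\mathcal{P}|$ entries per level, and a naive count would give $\Theta(k|\mathcal{P}|)$. To avoid this I will note that only the rows of nodes at depth $d-1$ change, so the additions can be restricted to the support of the product. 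Under that reading, level $d$ costs $O(nnz(\mathbf{L}^{(d)}) + |\{\text{nodes at depth } d-1\}|)$. Summing over $d$ gives $O(\#\text{edges} + |\mathcal{P}|) = O(|\mathcal{P}|)$, since the TFSDP is a tree. Line~\ref{line:combination} costs $nnz(\mathbf{B}) + O(|\mathcal{P}|)$. Line~\ref{line:regret-update} costs $O(|\Sigma^+|)$, using $nnz(\mathbf{C}) = |\Sigma^+|$ and element-wise operations. Together these give $W = O(|\mathcal{P}|)$. The matching lower bound $\Omega(|\mathcal{P}|)$ comes from Line~\ref{line:zeros} alone, so $W = \Theta(|\mathcal{P}|)$.

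\textbf{Depth.} The non-loop lines contribute only a little. Lines~\ref{line:zeros}, \ref{line:utility-conversion} and~\ref{line:combination} have depth $O(1)$, because rows of $\mathbf{A}$ and $\mathbf{B}$ have at most one non-zero. Line~\ref{line:regret-update} has depth $O(\log \max_j |\mathcal{A}_j|) \le O(\log B)$: the product with $\mathbf{C}$ sums rows of length at most $\max_j|\mathcal{A}_j|$, and $\mathbf{C}^\top$ has rows with a single non-zero. Each loop iteration $d$ multiplies by $\mathbf{L}^{(d)}$, whose rows have at most $B$ non-zeros, so it has depth $\Theta(\log B)$. The two products in an iteration are independent, and the additions are $O(1)$. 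The $k$ iterations must be sequential, since iteration $d$ reads the $\vec{v}^{(t)}$ values written by iteration $d+1$. Summing gives $D = O(k\log B + \log B) = O(k\log B)$. For the lower bound I will argue within the stated implementation model, where every SpMV row reduction is a balanced tree. A level containing a node with $B$ children forces a reduction of depth $\Omega(\log B)$. Because the loop is run over all $k$ levels in sequence, each paying the cost of its level's matrix product, the depth of the worst case over TFSDPs with parameters $k, B$ is $\Omega(k\log B)$. One can take a tree where every level has a node of degree $B$, so the bound is tight.

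The main obstacle is the work bound for the loop. With dense vector additions over all of $\mathbb{R}^{\mathcal{P}}$ at every level, the work would be $\Theta(k|\mathcal{P}|)$, not $\Theta(|\mathcal{P}|)$. The proof therefore has to interpret the level update as a sparse update touching only depth-$(d-1)$ entries, which is what the CSR implementation does. A secondary subtlety is that $\Theta$ in the depth bound is a worst-case tightness statement, since levels may have smaller branching than $B$. I will state it explicitly as such.
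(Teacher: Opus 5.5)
Your proposal is correct and follows the paper's proof essentially line for line. It uses the same per-line work and depth accounting, and the same observation that the in-place additions in the bottom-up loop need only touch the entries produced at each level, so the loop's total work is linear. For the depth it uses the same row bound $B$ on $\mathbf{L}^{(d)}$ together with $\max_{j \in \mathcal{J}}|\mathcal{A}_j| = O(B)$.

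Your one addition slightly sharpens the paper: you read the $\Theta(k\log B)$ depth as a worst-case tightness statement, witnessed by a tree with a degree-$B$ node at every level. The paper simply asserts $\Theta(\log B)$ depth for every level's product.
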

The proof is in Appendix~\ref{sec:observe-utility-complexity}. Lemmas~\ref{lmm:next-strategy-complexity} and~\ref{lmm:observe-utility-complexity} can be combined to give the total work and depth of parallelized CFR, described in Algorithm~\ref{alg:parallelized-cfr}.

\begin{theorem}
	\label{thm:complexity}
	Each iteration of parallelized sequence-form CFR, described in Algorithm~\ref{alg:parallelized-cfr}, has the total work of $W = \Theta(|\mathcal{P}|)$ and the depth of $D = \Theta(k\log{B})$.
\end{theorem}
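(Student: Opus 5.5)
The plan is to treat one iteration of Algorithm~\ref{alg:parallelized-cfr} as a call to \textsc{NextStrategy} followed by a call to \textsc{ObserveUtility}. Work and depth are both additive under sequential composition, so Lemmas~\ref{lmm:next-strategy-complexity} and~\ref{lmm:observe-utility-complexity} give the total work
\[
W = \Theta(|\mathcal{P}|) + \Theta(|\mathcal{P}|) = \Theta(|\mathcal{P}|)
\]
and the total depth
\[
D = \Theta\bigl(\log(\max_{j \in \mathcal{J}}|\mathcal{A}_j|) + k\bigr) + \Theta(k\log B).
\]
For the work, the lower bound needs no extra argument: each call already performs $\Omega(|\mathcal{P}|)$ work, for instance by writing $\vec{y}^{(t)}$ or $\vec{v}^{(t)}$.

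All that remains is to simplify the depth sum to $\Theta(k\log B)$. Every decision point $j$ has $|\mathcal{A}_j|$ children, and $B$ is the maximum number of children of any node. Hence $\max_j |\mathcal{A}_j| \le B$, which gives
\[
\log\bigl(\max_j |\mathcal{A}_j|\bigr) \le \log B \le k\log B,
\]
using $k \ge 1$ for any nontrivial TFSDP. For the additive $k$ term, I would argue $k = O(k\log B)$ whenever $B \ge 2$. The convention, as in the work-depth literature, is that depth-of-SpMV costs like $\log(\text{row nnz})$ are read as $\Theta(1+\log(\cdot))$. Under that convention $k \le k\log B$ up to constants, even when many nodes have a single child.

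Since $\Theta(k\log B)$ is the dominant term, its lower bound passes directly to the sum. The upper bound then follows from the two inequalities above, yielding $D = \Theta(k\log B)$.

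The only delicate step is this degenerate-case bookkeeping, namely $B=1$ or unary chains, where $\log B$ vanishes. I would handle it by explicitly adopting the $\Theta(1+\log B)$ reading of the SpMV depth, consistent with the proof of Lemma~\ref{lmm:observe-utility-complexity}, so that each of the $k$ levels contributes at least constant depth. With that convention fixed, the combination is routine.
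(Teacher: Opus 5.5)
Your proposal is correct and follows the paper's proof: you add the bounds from Lemmas~\ref{lmm:next-strategy-complexity} and~\ref{lmm:observe-utility-complexity}, then absorb the $\log(\max_j|\mathcal{A}_j|) + k$ term into $k\log B$ using $\max_j|\mathcal{A}_j| \le B$, the same observation the paper records in its proof of Lemma~\ref{lmm:observe-utility-complexity}. Your explicit handling of the degenerate case $B=1$ via the $\Theta(1+\log B)$ reading only makes precise a convention the paper relies on implicitly when it absorbs $\Theta(k)$ into $\Theta(k\log B)$.
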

The proof is in Appendix~\ref{sec:complexity}.

\begin{remark}
	Recall that the per-iteration time complexity of non-parallelized sequence-form CFR is linear with respect to the number of nodes in the TFSDP, \textit{i.e.}, $O(|\mathcal{P}|)$.
	Since $W$ is also proportional to the size of the TFSDP, no efficiency is lost even when there is no parallelism.
\end{remark}
Combining $W$ and $D$, we obtain the following theoretical speedup potential:

\begin{corollary}
	\label{cll:parallelism}
	The parallelism of each iteration of parallelized sequence-form CFR, described in Algorithm~\ref{alg:parallelized-cfr}, is $\frac{|\mathcal{P}|}{k\log{B}}$.
\end{corollary}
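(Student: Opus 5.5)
Corollary~\ref{cll:parallelism} follows almost directly from Theorem~\ref{thm:complexity}, using the standard definition of parallelism in the work-depth model. The parallelism of a computation is the ratio $W/D$ of its total work to its depth. It is the maximum speedup obtainable with an unbounded number of processors, because a greedy (Brent-type) scheduler on $P$ processors runs in time $O(W/P + D)$. I would begin by recalling this definition, so that the corollary reduces to a quotient of the two quantities already established.

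Next I would apply Theorem~\ref{thm:complexity}, which states that one iteration of Algorithm~\ref{alg:parallelized-cfr} has work $W = \Theta(|\mathcal{P}|)$ and depth $D = \Theta(k\log B)$. Dividing gives
\[
\frac{W}{D} = \Theta\!\left(\frac{|\mathcal{P}|}{k \log B}\right).
\]
The corollary states this quantity up to constant factors, as is usual for asymptotic parallelism. Since both bounds are tight ($\Theta$ rather than $O$), the ratio is also tight. This matters, because an upper bound on work combined with a lower bound on depth would not by itself determine the parallelism.

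The only point needing care is a degenerate case: if $B = 1$, then $\log B = 0$ and the expression is ill-defined. In that case the TFSDP is a path, and the depth is really $\Theta(k)$, coming from the sequential level-by-level traversal. So I would read $\log B$ as $\max\{1, \log B\}$, or equivalently assume $B \ge 2$, which holds for any nontrivial decision process. With that convention the ratio is well defined and the corollary is immediate, so I expect no substantive obstacle beyond stating this convention.
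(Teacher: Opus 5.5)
Your proposal is correct and matches the paper's proof, which says only that the corollary follows immediately from Theorem~\ref{thm:complexity} and the definition of parallelism, i.e.\ dividing $W = \Theta(|\mathcal{P}|)$ by $D = \Theta(k\log B)$. Your reading of the statement as holding up to constant factors, and your convention that $B \ge 2$ (equivalently, replacing $\log B$ by $\max\{1,\log B\}$), are sensible refinements that the paper leaves implicit.
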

The proof is in Appendix~\ref{sec:parallelism}.

\begin{remark}
	Intuitively, as the height of the TFSDP $k$ increases, parallelism decreases.
	What seems counterintuitive at first glance is that the increase in $B$, the branching factor, also decreases parallelism, whereas we expect the opposite to be true.
	This is the consequence of Lemma~\ref{lmm:observe-utility-complexity}, which introduces the $\log{B}$ factor due to the depth of the dot product operation in the CSR sparse matrix-vector multiplication.
	Still, it is worth noting that, in practice, increasing $\log B$ decreases the height $k$, assuming the size of the TFSDP stays constant.
\end{remark}
We conclude this section with the space complexity of our parallelized CFR.

\begin{theorem}
	\label{thm:space-complexity}
	The space complexity of parallelized sequence-form CFR, described in Algorithm~\ref{alg:parallelized-cfr}, is linear with respect to the size of the TFSDP, \textit{i.e.}, $\Theta(|\mathcal{P}|)$.
\end{theorem}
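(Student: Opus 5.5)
The plan is to bound the storage of every object the algorithm keeps between or within iterations, show that each is $O(|\mathcal{P}|)$, and then observe that some of these objects already need $\Omega(|\mathcal{P}|)$ space. Summing gives $\Theta(|\mathcal{P}|)$.

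First, we relate the index sets to $|\mathcal{P}|$. Every non-empty sequence $(j,a)\in\Sigma^+$ corresponds injectively to the child node $\rho(j,a)$, and $\emptyset$ corresponds to the root. Hence $|\Sigma|\le|\mathcal{P}|$ and $|\Sigma^+|<|\mathcal{P}|$. Also $|\mathcal{J}|\le|\mathcal{P}|$, since decision points are nodes.

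Second, we account for the dense vectors. These are $\vec{r}^{(t)}$, $\vec{b}^{(t)}$, $\vec{q}^{(t)}$ (in $\mathbb{R}^{\Sigma^+}$), $\vec{x}^{(t)}$, $\vec{u}^{(t)}$ (in $\mathbb{R}^\Sigma$), $\vec{y}^{(t)}$, $\vec{v}^{(t)}$, $\vec{w}^{(t)}$ (in $\mathbb{R}^\mathcal{P}$), and the intermediate products such as $\mathbf{C}\,\vec{z}$ (in $\mathbb{R}^\mathcal{J}$). There is a constant number of them, each of dimension at most $|\mathcal{P}|$, so together they use $O(|\mathcal{P}|)$ space. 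Only $\vec{r}$ must persist across iterations, so there is no growth with $t$. The temporaries created when evaluating an expression such as $\mathbf{C}^\top(\mathbf{C}\,\vec{z})$ are likewise a constant number of vectors of this size.

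Third, we bound the sparse matrices in CSR format. A CSR matrix uses $O(nnz+\#\text{rows})$ space. From the facts stated earlier:
\begin{itemize}
\item $nnz(\mathbf{A})=|\Sigma|$, with $|\mathcal{P}|$ rows;
\item $nnz(\mathbf{B})=|\Sigma^+|$, with $|\mathcal{P}|$ rows;
\item $nnz(\mathbf{C})=|\Sigma^+|$, with $|\mathcal{J}|$ rows.
\end{itemize}
If $\mathbf{C}^\top$ or $\mathbf{B}^\top$ is stored explicitly, it has the same number of nonzeros.

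The level matrices need the most care, since each $\mathbf{L}^{(d)}$ formally has $|\mathcal{P}|$ rows and there are $k$ of them. A naive CSR row-pointer array per level would cost $\Theta(k|\mathcal{P}|)$. I would handle this in one of two ways:
\begin{itemize}
\item store each $\mathbf{L}^{(d)}$ with row pointers only over the nodes at depth $d-1$ (equivalently, as a submatrix indexed by the level's nodes), so that its size is $O(\#\text{edges at level }d+\#\text{nodes at depth }d-1)$;
\item or store the single matrix $\sum_d \mathbf{L}^{(d)}$, which is the tree's adjacency matrix, and pick out levels by row ranges when nodes are ordered by depth.
\end{itemize}
Either way, summing over levels gives $O(|\mathcal{P}|)$, because the total number of edges in a tree is $|\mathcal{P}|-1$ and each node lies at exactly one depth. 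The weights $\vec{b}^{(t)}$ are written into the existing nonzero slots each iteration, so this adds no extra space. I expect this representation issue to be the main obstacle: the statement is only true under such a storage convention, so I would state that convention explicitly.

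Finally, for the lower bound, $\vec{y}^{(t)}\in\mathbb{R}^\mathcal{P}$ alone needs $\Omega(|\mathcal{P}|)$ space. Combined with the upper bound, this gives $\Theta(|\mathcal{P}|)$.
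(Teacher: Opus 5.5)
Your proposal is correct and takes essentially the same approach as the paper, whose proof simply observes that every operand vector and sparse matrix needs $\Theta(|\mathcal{P}|)$ space and that each sparse matrix-vector product and vector operation in Algorithm~\ref{alg:parallelized-cfr} uses space linear in its operands. Your version is more careful than the paper's in two places: you state the lower bound explicitly, and you notice that storing each of the $k$ level matrices $\mathbf{L}^{(d)}$ as a full CSR matrix with $|\mathcal{P}|$ row pointers would cost $\Theta(k|\mathcal{P}|)$, which the paper's one-line claim glosses over and which your per-level (or single-adjacency-matrix) storage convention repairs.
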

The proof is in Appendix~\ref{sec:space-complexity}.
It follows from the fact that the space complexity of sparse matrix-vector multiplication is proportional to the number of non-zeros, and every vector operation in Algorithm~\ref{alg:parallelized-cfr} has linear space complexity.

\begin{remark}
	The space complexity is identical to that of unparallelized sequence-form CFR, which is also $\Theta(|\mathcal{P}|)$.
	But this differs from that of the classical implementation of CFR, which is $\Theta(|\Sigma^+|)$.
	However, note that $|\Sigma^+| = O(|\mathcal{P}|)$.
\end{remark}

\subsection{Parallelizing the tabular variants of CFR}

In this section, we demonstrate that our parallelization framework is powerful enough to be applied to modern, mainstream members of the tabular family of CFR variants, including the state-of-the-art.

\subsubsection{Alternating player updates}

The classical implementation of CFR~\cite{zinkevichetal2007} interleaves the player updates so their strategies are updated simultaneously in each iteration.
Later works found that alternating the player updates, that is, updating the strategy of one player before the other and so on, empirically exhibits much faster convergence.
Alternating the player updates is trivial in the sequence-form implementation of CFR: one simply needs to avoid interleaving the calls to both players' \textsc{NextStrategy} and \textsc{ObserveUtility} procedures.
The same can be applied to parallelized sequence-form CFR.

\begin{wrapfigure}{R}{0.5\linewidth}
	\vspace{-1em}
	\hfill
	\begin{minipage}{0.95\linewidth}
		\begin{algorithm}[H]
			\caption{Excerpt of parallelized CFR\textsuperscript{+}}
			\label{alg:parallelized-cfr+}
			\textsc{ObserveUtility\textsuperscript{*}}$\left(\vec{u}^{(t)} \in \mathbb{R}^{\Sigma}\right)$: \\
			\Indp{
				\Comment{\textsc{ObserveUtility}\:from\:parallel\:CFR}
				\textsc{ObserveUtility}$(\vec{u}^{(t)})$\;
				$\mathbb{R}^{\Sigma^+} \ni \vec{r}^{(t)} \coloneqq \left(\vec{r}^{(t)}\right)^+$\;
			}
			\Indm
		\end{algorithm}
	\end{minipage}
\end{wrapfigure}

\subsubsection{CFR\textsuperscript{+}}

\textit{CFR\textsuperscript{+}}~\cite{tammelin2014}, unlike CFR, floors the counterfactual regrets at each iteration and involves alternating player updates.
This ensures that unpromising actions are still played with zero probability, but can be quickly explored when they become promising.
Our parallelization framework can be extended to CFR\textsuperscript{+} simply by appending a single statement at the end of \textsc{ObserveUtility}, as shown in Algorithm~\ref{alg:parallelized-cfr+}.

\begin{wrapfigure}{R}{0.5\linewidth}
	\vspace{-2em}
	\hfill
	\begin{minipage}{0.95\linewidth}
		\begin{algorithm}[H]
			\caption{Excerpt of parallelized DCFR}
			\label{alg:parallelized-dcfr}
			\textsc{ObserveUtility\textsuperscript{*}}$\left(\vec{u}^{(t)} \in \mathbb{R}^{\Sigma}\right)$: \\
			\Indp{
				\Comment{\textsc{ObserveUtility}\:from\:parallel\:CFR}
				\textsc{ObserveUtility}$(\vec{u}^{(t)})$\;
				$\mathbb{R}^{\Sigma^+} \ni \vec{r}^{(t)} \coloneqq \vec{r}^{(t)} \odot \left(\begin{cases} \frac{t^\alpha}{t^\alpha + 1} & r^{(t)}_{(j,a)} > 0 \\ \frac{t^\beta}{t^\beta + 1} & r^{(t)}_{(j,a)} < 0 \\ 1 & r^{(t)}_{(j,a)} = 0 \\ \end{cases}\right)_{(j,a) \in \Sigma^+}$\;
			}
			\Indm
		\end{algorithm}
	\end{minipage}
\end{wrapfigure}

\subsubsection{Discounted CFR (DCFR)}

\textit{Discounted CFR (DCFR)}~\cite{brownandsandholm2019aaai}, unlike CFR, discounts the cumulative counterfactual regrets, depending on their signs.
On each iteration, positive regrets are multiplied by $t^\alpha/(t^\alpha+1)$ whereas negative regrets are multiplied by $t^\beta/(t^\beta+1)$.
CFR\textsuperscript{+} is a special case of DCFR where $\alpha = \infty$ and $\beta = -\infty$.
\citet{brownandsandholm2019aaai} remarked that setting $(\alpha, \beta) = (1.5, 0)$, quadratically weighting each iteration's contributions to the average strategy, and alternating player updates consistently outperforms CFR\textsuperscript{+} in practice.
Algorithm~\ref{alg:parallelized-dcfr} demonstrates how our framework can be applied to parallelize DCFR.

\begin{wrapfigure}{R}{0.5\linewidth}
	\vspace{-4em}
	\hfill
	\begin{minipage}{0.95\linewidth}
		\begin{algorithm}[H]
			\caption{Excerpt of PCFR\textsuperscript{(+)}}
			\label{alg:predictive-cfr}
			\textsc{NextStrategy\textsuperscript{*}}$\left(\vec{m}^{(t)} \in \mathbb{R}^{\Sigma}\right)$: \\
			\Indp{
				\Comment{Back\:up\:counterfactual\:regrets}
				$\mathbb{R}^{\Sigma^+} \ni \vec{\rho}^{(t)} \coloneqq \vec{r}^{(t)}$\;
				\Comment{Copy\:prev.\:behavioral\:strategy}
				$\mathbb{R}^{\Sigma^+}_{\ge 0} \ni \vec{b}^{(t)} \coloneqq \vec{b}^{(t-1)}$\;
				\Comment{Observe prediction}
				\textsc{ObserveUtility}$(\vec{m}^{(t)})$\;
				\Comment{\textsc{NextStrategy}\:from\:parallel\:CFR}
				$\mathcal{X} \ni \vec{x}^{(t)} \coloneqq$ \textsc{NextStrategy}$()$\;
				\Comment{Restore\:counterfactual\:regrets}
				$\mathbb{R}^{\Sigma^+} \ni \vec{r}^{(t)} \coloneqq \vec{\rho}^{(t)}$\;
				\Return $\vec{x}^{(t)}$\;
			}
			\Indm
		\end{algorithm}
	\end{minipage}
\end{wrapfigure}

\subsubsection{Predictive CFR (PCFR) and PCFR\textsuperscript{+}}

\textit{Predictive CFR (PCFR)} (respectively, PCFR\textsuperscript{+})~\cite{farinaetal2021} is the predictive counterpart of CFR (respectively, CFR\textsuperscript{+}).
In both PCFR and PCFR\textsuperscript{+}, a prediction vector $\vec{m}^{(t)}$ is used to guess the next iteration's instantaneous counterfactual regrets.
A popular option is setting the prediction vector to the previous iteration's counterfactual utilities, \textit{i.e.}, $\vec{m}^{(t)} = \vec{u}^{(t - 1)}$, which has been shown to perform quite well in practice.
Empirically, PCFR\textsuperscript{+} tends to exhibit state-of-the-art performance in most applications.
We demonstrate how our framework can be applied to both PCFR and PCFR\textsuperscript{+} in Algorithm~\ref{alg:predictive-cfr}.

\section{Experiments}
\label{sec:experiments}

\begin{figure}[t!]
	\centering
	\includegraphics[width=\linewidth]{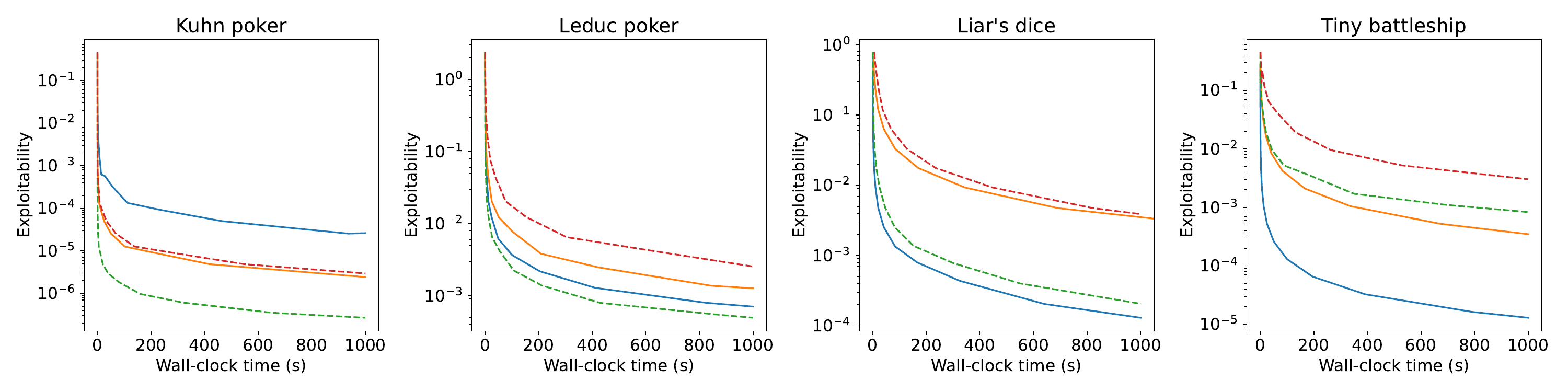}
	\includegraphics[width=0.87\linewidth]{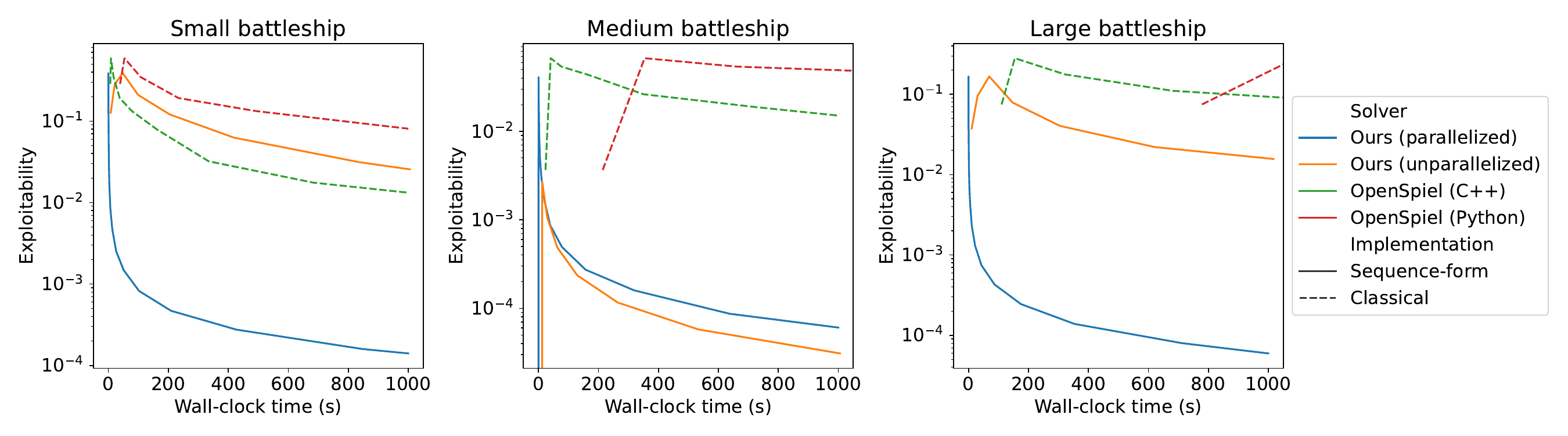}
	\caption{Exploitabilities versus wall-clock time for our CFR implementations and those of OpenSpiel.}
	\label{fig:gpugt:plot}
\end{figure}

For our experiments, we created two CFR implementations: one with parallelism and another without, and tested their runtime performance.
In the parallelized version, we used CuPy~\cite{cupy_learningsys2017} for GPU-accelerated linear algebra operations.
While there is a lack of good benchmarks for CFR performance, OpenSpiel~\cite{lanctotetal2020} offers popular Python and C++ CFR implementations, which we use as baselines to compare against.
We justify the choice of our baselines later in the discussions.
We used double-precision floating-point numbers in our implementations for fair comparison, as OpenSpiel uses the aforesaid format.
Our testbench specification is given in Appendix~\ref{sec:resources}.

\begin{wraptable}{R}{0.5\linewidth}
	\centering
	\caption{Speedups (or slowdowns) achieved by our parallelized implementation compared to OpenSpiel's C++ implementation.}
	\label{tab:condensed}
	\begin{tabular}{lcc}
		\toprule
		Game & \# nodes & Speedup \\
		\midrule
		Kuhn poker & 58 & $-3.6 \times 10^2$ \\
		Leduc poker & $9.4 \times 10^3$ & $-1.9$ \\
		Tiny battleship & $5.9 \times 10^4$ & $6.9$ \\
		Liar's dice & $2.9 \times 10^5$ & $1.5$ \\
		Small battleship & $2.3 \times 10^6$ & $2.1 \times 10^2$ \\
		Medium battleship & $1.5 \times 10^7$ & $1.2 \times 10^3$ \\
		Large battleship & $5.2 \times 10^7$ & $4.1 \times 10^3$ \\
		\bottomrule
	\end{tabular}
\end{wraptable}

We conducted our experiments on seven common benchmark games provided by OpenSpiel~\cite{lanctotetal2020}: Kuhn poker, Leduc poker, liar's dice, and four battleship games~\cite{farinaetal} of varying sizes, whose initialization parameters are shown in Appendix~\ref{sec:battleship}.
For every implementation and game, CFR was run for 1,000 seconds.
We also ensured that at least eight iterations of CFR were executed for each setting.
Note that the minimum number of iterations can be low because our goal is not to solve these games but instead to observe the runtime behavior of different implementations of CFR.
The number of nodes in the game trees of the games tested ranges from 58 to $6.7 \times 10^7$ and is shown in Table~\ref{tab:condensed}.

For each run, we recorded the iteration times, wall-clock times, and exploitabilities.
Exploitability is a standard metric for quantifying the quality of a strategy profile in imperfect-information games (lower is better).
The computed exploitabilities over the wall-clock time are plotted in Figure~\ref{fig:gpugt:plot}.
Note that some lines seem horizontally shifted because the corresponding implementations took substantially more time to produce even a single iterate.
Also, the speedups achieved by our parallelized implementation on a GPU compared to OpenSpiel's more performant C++ implementation on a CPU are shown in Table~\ref{tab:condensed}.
Here, positive (respectively, negative) values denote how many times faster (respectively, slower) our parallelized implementation is compared to OpenSpiel's C++ implementation.

On smaller games, OpenSpiel's implementations outperform our parallelized implementation.
Here, it is clear that the GPU overhead makes parallelization impractical on smaller scales.
However, as the game size grows, the speed of our parallelized implementation overtakes the others', and the resulting speedup becomes more significant.
This indicates that, in practice, a high degree of parallelization is achieved by our technique.
Overall, our parallelized implementation is up to about $2.7 \times 10^4$ and $4.1 \times 10^3$ times faster than OpenSpiel's Python and C++ implementations, respectively.
We provide additional results, including iteration times and memory usage, in Appendix~\ref{sec:additional}.

\section{Discussion}

Although parallelizing CFR does not necessarily increase the size of the game that can be solved, it can make finding an optimal solution to these games much faster.
One prime use case of our parallelization framework for researchers is in developing new variants of CFR and benchmarking them.
In these settings, games are typically at most modestly large, and from thousands to tens of thousands of CFR iterations are often executed.
Additionally, our framework can be useful in solving games that are played in a repeated setting, whose individual game trees usually differ by little to none across repetitions.
An example of this is poker, whose action depth is determined by stack sizes.

While we acknowledge that comparing the performance of our parallelized implementation on a GPU to that of OpenSpiel on a CPU may not facilitate a comparison that is entirely fair, our experiments nonetheless showed that parallelism can achieve extreme speedups, even in more regular use cases.
Also worth noting is that our implementation achieves this performance while completely retaining compatibility with games defined in OpenSpiel, which, again, is a widely used software library for game-theoretic primitives.
This fact alone could be of separate interest to the extensive-form game-solving community.

The degree of parallelism that can be achieved by our implementation also depends on the hardware being used (\textit{e.g.}, the number of CUDA cores on a GPU, the number of CPU threads, \textit{etc.}).
While CPU multi-threading can technically be used in place of GPUs to parallelize CFR, we would expect this to be less effective, especially since GPUs have optimized opcodes for performing large-scale linear algebra operations, unlike most CPUs.

While our focus was on parallelizing sequence-form CFR, our ideas can also be applied to parallelize classical CFR, although several differences arise.
Most crucially, the space complexity of parallelized classical CFR would be linear with respect to the number of game tree nodes, which is generally larger than the space complexity of parallelized sequence-form CFR (linear with respect to the size of the tree-form sequential decision processes).
Indeed, parallelized classical CFR requires some of the values specific to each player (e.g., player reach probabilities) to be needlessly duplicated across nodes, whereas this can be handled more efficiently in sequence-form CFR.
Additionally, parallelized classical CFR requires introducing an additional logic matrix that represents, for each node, the player whose turn it is to act.
This logic matrix would then be used to select the relevant values for each player during the parallelized tree traversal.
To summarize, while our parallelization framework can also be applied to classical CFR, doing so results in an implementation that is less memory efficient than what can be achieved by parallelizing sequence-form CFR.
Again, note that classical CFR and sequence-form CFR are equivalent in the sense that they produce identical iterates, so by parallelizing sequence-form CFR, we have also indirectly parallelized classical CFR.

\section{Conclusions and future research}

In this paper, we provided a generalized framework for parallelizing the tabular members of the CFR family of algorithms, and gave algorithmic descriptions of the parallelized counterparts of CFR, CFR\textsuperscript{+}, DCFR, PCFR, and PCFR\textsuperscript{+}.
In our analysis using the work-depth model, the total work of the resulting parallel algorithm is equivalent to the time complexity of sequence-form CFR, and the depth scales with the height of the TFSDP.
Our benchmarks show that our implementation on a GPU offers speedups of up to four orders of magnitude compared to Google DeepMind OpenSpiel's~\cite{lanctotetal2020} CFR implementations on a CPU.
While our technique does not necessarily increase the size of the game that can be solved, it can make solving games dramatically faster.
Our technique can be useful in researching new CFR variants and solving games that are played in a repeated setting.

Possible topics for future work include parallelizing other game-theoretic algorithms such as pruned CFR~\cite{brownandssandholm,brownetal2017,brownandsandholm2017}, finding best response, and exploitability calculations.

\section*{Acknowledgements}

This work has been supported by the Vannevar Bush Faculty Fellowship ONR N00014-23-1-2876, National Science Foundation grant RI-2312342, and NIH award A240108S001.
Any opinions, findings, and conclusions or recommendations expressed in this material are those of the authors and do not necessarily reflect the views of the funding agencies.

\bibliographystyle{abbrvnat}
\bibliography{neurips_2026}

@inproceedings{
	zinkevichetal2007,
	author = {Zinkevich, Martin and Johanson, Michael and Bowling, Michael and Piccione, Carmelo},
	booktitle = {Proceedings of the Annual Conference on Neural Information Processing Systems (NeurIPS)},
	pages = {},
	title = {Regret Minimization in Games with Incomplete Information},
	year = {2007},
}

@misc{
	tammelin2014,
	title = {Solving Large Imperfect Information Games Using {CFR+}}, 
	author = {Oskari Tammelin},
	year = {2014},
	eprint = {1407.5042},
	archivePrefix = {arXiv},
	primaryClass = {cs.GT},
}

@inproceedings{
	brownandsandholm2019aaai,
	author = {Brown, Noam and Sandholm, Tuomas},
	title = {Solving imperfect-information games via discounted regret minimization},
	year = {2019},
	booktitle = {Proceedings of the AAAI Conference on Artificial Intelligence (AAAI)},
}

@inproceedings{
	xuetal2024,
	title={Dynamic Discounted Counterfactual Regret Minimization},
	author={Hang Xu and Kai Li and Haobo Fu and Qiang Fu and Junliang Xing and Jian Cheng},
	booktitle={Proceedings of the International Conference on Learning Representations (ICLR)},
	year={2024},
}

@misc{
	lanctotetal2020,
	title = {{O}pen{S}piel: A Framework for Reinforcement Learning in Games}, 
	author = {Marc Lanctot and Edward Lockhart and Jean-Baptiste Lespiau and Vinicius Zambaldi and Satyaki Upadhyay and Julien Pérolat and Sriram Srinivasan and Finbarr Timbers and Karl Tuyls and Shayegan Omidshafiei and Daniel Hennes and Dustin Morrill and Paul Muller and Timo Ewalds and Ryan Faulkner and János Kramár and Bart De Vylder and Brennan Saeta and James Bradbury and David Ding and Sebastian Borgeaud and Matthew Lai and Julian Schrittwieser and Thomas Anthony and Edward Hughes and Ivo Danihelka and Jonah Ryan-Davis},
	year = {2019},
	eprint = {1908.09453},
	archivePrefix = {arXiv},
	primaryClass = {cs.LG},
}

@article{
	hartandmascolell2000,
	author = {Sergiu Hart and Andreu Mas-Colell},
	journal = {Econometrica},
	number = {5},
	pages = {1127--1150},
	publisher = {[Wiley, Econometric Society]},
	title = {A Simple Adaptive Procedure Leading to Correlated Equilibrium},
	volume = {68},
	year = {2000},
}

@inproceedings{cupy_learningsys2017,
  author       = "Okuta, Ryosuke and Unno, Yuya and Nishino, Daisuke and Hido, Shohei and Loomis, Crissman",
  title        = {{CuPy}: A {NumPy}-Compatible Library for {NVIDIA} {GPU} Calculations},
  booktitle    = "Proceedings of the Workshop on Machine Learning Systems (LearningSys) in the Annual Conference on Neural Information Processing Systems (NeurIPS)",
  year         = "2017",
}

@inproceedings{
	lanctotetal2009,
	author = {Lanctot, Marc and Waugh, Kevin and Zinkevich, Martin and Bowling, Michael},
	booktitle = {Proceedings of the Annual Conference on Neural Information Processing Systems (NeurIPS)},
	pages = {},
	title = {{M}onte {C}arlo Sampling for Regret Minimization in Extensive Games},
	year = {2009},
}

@inproceedings{
	tammelinetal2015,
	author = {Tammelin, Oskari and Burch, Neil and Johanson, Michael and Bowling, Michael},
	title = {Solving heads-up limit {T}exas {H}old'em},
	year = {2015},
	booktitle = {Proceedings of the International Joint Conference on Artificial Intelligence (IJCAI)},
	numpages = {8},
	location = {Buenos Aires, Argentina},
}

@article{
	brownandsandholm2018,
	author = {Noam Brown  and Tuomas Sandholm },
	title = {Superhuman {AI} for heads-up no-limit poker: {Libratus} beats top professionals},
	journal = {Science},
	volume = {359},
	number = {6374},
	pages = {418--424},
	year = {2018},
}

@article{
	brownandsandholm2019,
	author = {Noam Brown  and Tuomas Sandholm },
	title = {Superhuman {AI} for multiplayer poker},
	journal = {Science},
	volume = {365},
	number = {6456},
	pages = {885--890},
	year = {2019},
}

@article{
	moravciketal2017,
	author = {Matej Moravčík  and Martin Schmid  and Neil Burch  and Viliam Lisý  and Dustin Morrill  and Nolan Bard  and Trevor Davis  and Kevin Waugh  and Michael Johanson  and Michael Bowling },
	title = {{DeepStack}: Expert-level artificial intelligence in heads-up no-limit poker},
	journal = {Science},
	volume = {356},
	number = {6337},
	pages = {508--513},
	year = {2017},
}

@mastersthesis{reis2015,
  title = {A {GPU} implementation of Counterfactual Regret Minimization},
  author = {Reis, Jo{\~a}o},
  year = {2015},
  school = {University of Porto},
}

@misc{
	rudolf2021,
	author = {Jan Rudolf},
	title = {Counterfactual Regret Minimization on {GPU}},
	year = {2021},
	school = {Czech Technical University in Prague},
}

@inproceedings{
	kepneretal2016,
	author = {Kepner, Jeremy and Aaltonen, Peter and Bader, David and Buluç, Aydin and Franchetti, Franz and Gilbert, John and Hutchison, Dylan and Kumar, Manoj and Lumsdaine, Andrew and Meyerhenke, Henning and McMillan, Scott and Yang, Carl and Owens, John D. and Zalewski, Marcin and Mattson, Timothy and Moreira, Jose},
	booktitle = {Proceedings of the IEEE High Performance Extreme Computing Conference (HPEC)}, 
	title = {Mathematical foundations of the {GraphBLAS}}, 
	year = {2016},
	volume = {},
	number = {},
}

@inproceedings{brownetal2019,
  title = 	 {Deep Counterfactual Regret Minimization},
  author =       {Brown, Noam and Lerer, Adam and Gross, Sam and Sandholm, Tuomas},
  booktitle = 	 {Proceedings of the International Conference on Machine Learning (ICML)},
  year = 	 {2019}
}

@inproceedings{
mcaleeretal2023,
title={{ESCHER}: Eschewing Importance Sampling in Games by Computing a History Value Function to Estimate Regret},
author={Stephen Marcus McAleer and Gabriele Farina and Marc Lanctot and Tuomas Sandholm},
	booktitle={Proceedings of the International Conference on Learning Representations (ICLR)},
year={2023},
}

@inproceedings{farinaetal2019,
  title = 	 {Regret Circuits: Composability of Regret Minimizers},
  author =       {Farina, Gabriele and Kroer, Christian and Sandholm, Tuomas},
  booktitle = 	 {Proceedings of the International Conference on Machine Learning (ICML)},
  year = 	 {2019}
}

@inproceedings{
	farinaetal2021,
	title = {Faster Game Solving via Predictive {B}lackwell Approachability: Connecting Regret Matching and Mirror Descent},
	booktitle = {Proceedings of the AAAI Conference on Artificial Intelligence (AAAI)},
	author = {Farina, Gabriele and Kroer, Christian and Sandholm, Tuomas},
	year = {2021},
}

@inproceedings{farinaetal2020,
author = {Farina, Gabriele and Kroer, Christian and Sandholm, Tuomas},
title = {Stochastic regret minimization in extensive-form games},
year = {2020},
  booktitle = 	 {Proceedings of the International Conference on Machine Learning (ICML)},
articleno = {283},
numpages = {11},
}

@inproceedings{kroeretal,
 author = {Kroer, Christian and Farina, Gabriele and Sandholm, Tuomas},
	booktitle = {Proceedings of the Annual Conference on Neural Information Processing Systems (NeurIPS)},
 pages = {},
 title = {Solving Large Sequential Games with the Excessive Gap Technique},
 year = {2018}
}

@article{kolleretal,
title = {Efficient Computation of Equilibria for Extensive Two-Person Games},
journal = {Games and Economic Behavior},
volume = {14},
number = {2},
pages = {247--259},
year = {1996},
author = {Daphne Koller and Nimrod Megiddo and Bernhard {von Stengel}},
}

@article{vonstengel,
title = {Efficient Computation of Behavior Strategies},
journal = {Games and Economic Behavior},
volume = {14},
number = {2},
pages = {220--246},
year = {1996},
author = {Bernhard {von Stengel}},
}

@inproceedings{brownandssandholm,
 author = {Brown, Noam and Sandholm, Tuomas},
	booktitle = {Proceedings of the Annual Conference on Neural Information Processing Systems (NeurIPS)},
 title = {Regret-Based Pruning in Extensive-Form Games},
 year = {2015}
}

@inproceedings{brownetal2017, title={Dynamic Thresholding and Pruning for Regret Minimization}, booktitle={Proceedings of the AAAI Conference on Artificial Intelligence (AAAI)}, author={Brown, Noam and Kroer, Christian and Sandholm, Tuomas}, year={2017}}

@article{blelloch,
author = {Blelloch, Guy E.},
title = {Programming parallel algorithms},
year = {1996},
publisher = {Association for Computing Machinery},
volume = {39},
number = {3},
journal = {Commun. ACM},
pages = {85--97},
}

@inproceedings{brownandsandholm2017,
author = {Brown, Noam and Sandholm, Tuomas},
title = {Reduced space and faster convergence in imperfect-information games via Pruning},
year = {2017},
  booktitle = 	 {Proceedings of the International Conference on Machine Learning (ICML)},
}

@article{nesterov,
author = {Nesterov, Yuri},
title = {Excessive Gap Technique in Nonsmooth Convex Minimization},
journal = {SIAM Journal on Optimization},
volume = {16},
number = {1},
pages = {235--249},
year = {2005},
}

@inproceedings{gilpinetal,
author="Gilpin, Andrew
and Hoda, Samid
and Pe{\~{n}}a, Javier
and Sandholm, Tuomas",
title="Gradient-Based Algorithms for Finding {N}ash Equilibria in Extensive Form Games",
booktitle="Proceedings of the International Workshop on Internet and Network Economics (WINE)",
year="2007",
}

@article{hodaetal,
author = {Hoda, Samid and Gilpin, Andrew and Pe\`{n}a, Javier and Sandholm, Tuomas},
title = {Smoothing Techniques for Computing {N}ash Equilibria of Sequential Games},
year = {2010},
publisher = {INFORMS},
volume = {35},
number = {2},
journal = {Math. Oper. Res.},
pages = {494--512},
}

@inproceedings{gilpinandsandholm2010,
author = {Gilpin, Andrew and Sandholm, Tuomas},
title = {Speeding up gradient-based algorithms for sequential games},
year = {2010},
booktitle = {Proceedings of the International Conference on Autonomous Agents and Multiagent Systems (AAMAS)},
}

@inproceedings{deaneteal,
 author = {Dean, Jeffrey and Corrado, Greg and Monga, Rajat and Chen, Kai and Devin, Matthieu and Mao, Mark and Ranzato, Marc\textquotesingle aurelio and Senior, Andrew and Tucker, Paul and Yang, Ke and Le, Quoc and Ng, Andrew},
 booktitle = {Proceedings of the Annual Conference on Neural Information Processing Systems (NeurIPS)},
 title = {Large Scale Distributed Deep Networks},
 year = {2012}
}

@misc{goyaletal,
      title={Accurate, Large Minibatch {SGD}: Training {I}mage{N}et in 1 Hour}, 
      author={Priya Goyal and Piotr Dollár and Ross Girshick and Pieter Noordhuis and Lukasz Wesolowski and Aapo Kyrola and Andrew Tulloch and Yangqing Jia and Kaiming He},
      year={2018},
}

@misc{sergeevanddelbalso,
      title={{H}orovod: fast and easy distributed deep learning in {T}ensor{F}low}, 
      author={Alexander Sergeev and Mike Del Balso},
      year={2018},
}

@inproceedings{farinaetal,
 author = {Farina, Gabriele and Ling, Chun Kai and Fang, Fei and Sandholm, Tuomas},
 booktitle = {Proceedings of the Annual Conference on Neural Information Processing Systems (NeurIPS)},
 title = {Correlation in Extensive-Form Games: Saddle-Point Formulation and Benchmarks},
 year = {2019}
}

@inproceedings{zhangetal2026a,
      title={Faster Game Solving via Hyperparameter Schedules}, 
      author={Naifeng Zhang and Stephen McAleer and Tuomas Sandholm},
      year={2026},
	booktitle = {Proceedings of the AAAI Conference on Artificial Intelligence (AAAI)},
}

@misc{zhangetal2026b,
      title={Scale-Invariant Regret Matching and Online Learning with Optimal Convergence: Bridging Theory and Practice in Zero-Sum Games}, 
      author={Brian Hu Zhang and Ioannis Anagnostides and Tuomas Sandholm},
      year={2026},
}


\newpage
\appendix

\section{Omitted proofs in Section~\ref{sec:complexity-analysis}}
\label{sec:omitted-proofs}

This appendix section contains proofs excluded from the main paper content due to space constraints.

\subsection{Proof of Lemma~\ref{lmm:next-strategy-complexity}}
\label{sec:next-strategy-complexity}

\begin{proof}
	We begin by analyzing the total work.
	Line~\ref{line:behavioral-strategy-calculation} features two sparse matrix-vector multiplications and the Hadamard division operation.
	The total work of the sparse matrix-vector multiplications is $\Theta(nnz(\mathbf{C})) = \Theta(|\Sigma^+|)$, whereas the Hadamard division has the work of $\Theta(|\Sigma^+|)$.
	Thus, the total work of Line~\ref{line:behavioral-strategy-calculation} is $\Theta(|\Sigma^+|)$.
	Line~\ref{line:unit-vector} has the work of $\Theta(|\mathcal{P}|)$.
	Next, we analyze the total work of the top-down tree traversal between Lines~\ref{line:top-down-begin} to~\ref{line:top-down-end}.
	On each iteration, $\left(\mathbf{L}^{(d)}\right)^\top \vec{y}^{(t)}$ are added, in-place, to $\vec{y}^{(t)}$.
	First, we look at the sparse matrix-vector multiplication, whose work is $\Theta\left(nnz\left(\mathbf{L}^{(d)}\right)\right)$.
	Over the entire for-loop, the total work of the sparse matrix-vector multiplication is $\Theta\left(nnz\left(\mathbf{L}^{(1)}\right)\right) + \hdots + \Theta\left(nnz\left(\mathbf{L}^{(k)}\right)\right) = \Theta(|\mathcal{P}|)$, i.e., linear.
	Next, we analyze the in-place addition, which can be implemented efficiently so the total work is linear with respect to only the number of newly added values, which, in turn, is equal to the number of edges in the level structure at depth $d$.
	The edges don't overlap across levels, so the total work of the in-place additions over the for-loop is proportional to the total number of edges, i.e., $\Theta(|\mathcal{P}|)$.
	In sum, Lines~\ref{line:top-down-begin} to~\ref{line:top-down-end} has the total work of $\Theta(|\mathcal{P}|)$.
	Next, the work in Line~\ref{line:strategy-conversion} is $\Theta(nnz(\mathbf{A})) = \Theta(\Sigma)$.
	Note that $|\Sigma| = O(|\mathcal{P}|)$ and $|\Sigma^+| = O(|\mathcal{P}|)$.
	Overall, the total work is $W = \Theta(|\mathcal{P}|)$.

	Now, we analyze the depth.
	We begin with Line~\ref{line:behavioral-strategy-calculation}.
	The depth of the inner and outer CSR sparse matrix-vector multiplications are $\Theta(\log{(\max_{j \in \mathcal{J}}{|\mathcal{A}_j|})})$ and $\Theta(1)$, respectively, whereas the depth of the Hadamard division is $\Theta(1)$.
	Thus, the depth of Line~\ref{line:behavioral-strategy-calculation} is $\Theta(\log{(\max_{j \in \mathcal{J}}{|\mathcal{A}_j|})})$.
	The depth of Line~\ref{line:unit-vector} is trivially $\Theta(1)$.
	The depth of Line~\ref{line:top-down-end} is also $\Theta(1)$.
	This is repeated $k$ times, so the depth of the for-loop between Lines~\ref{line:top-down-begin} and~\ref{line:top-down-end} is $\Theta(k)$.
	Finally, the depth of Line~\ref{line:strategy-conversion} is $\Theta(1)$.
	Overall, the depth is $D = \Theta(\log{(\max_{j \in \mathcal{J}}{|\mathcal{A}_j|})} + k)$.
\end{proof}

\subsection{Proof of Lemma~\ref{lmm:observe-utility-complexity}}
\label{sec:observe-utility-complexity}

\begin{proof}
	We begin by analyzing the total work.
	Line~\ref{line:zeros} has the work of $\Theta(|\mathcal{P}|)$, and Line~\ref{line:utility-conversion} has the work of $\Theta(nnz(\mathbf{A})) + \Theta(|\mathcal{P}|) = \Theta(|\Sigma|) + \Theta(|\mathcal{P}|) = \Theta(|\mathcal{P}|)$.
	Following the same process used in the proof of Lemma~\ref{lmm:next-strategy-complexity}, we can derive the total work of the for-loop between Lines~\ref{line:bottom-up-begin} and~\ref{line:bottom-up-end} to be $\Theta(|\mathcal{P}|)$.
	Line~\ref{line:combination} has the total work of $\Theta(|\mathcal{P}|) + \Theta(nnz(\mathbf{B})) = \Theta(|\mathcal{P}|) + \Theta(|\Sigma^+|) = \Theta(|\mathcal{P}|)$.
	In Line~\ref{line:regret-update}, the two sparse matrix-vector multiplications have the work of $\Theta(nnz(\mathbf{C})) = \Theta(|\Sigma^+|)$ each, whereas the rest of the operations have the total work of $\Theta(|\Sigma^+|)$.
	This yields the work of $\Theta(|\Sigma^+|)$ for Line~\ref{line:regret-update}.
	Overall, the total work is $W = \Theta(|\mathcal{P}|)$.

	The depths of Lines~\ref{line:zeros} and~\ref{line:utility-conversion} is $\Theta(1)$.
	As in the depth analysis in the proof of Lemma~\ref{lmm:next-strategy-complexity}, the depth of the in-place addition in Line~\ref{line:bottom-up-end} is $\Theta(1)$, and, over the for-loop, $\Theta(k)$.
	The maximum number of non-zero entries in any row in $\mathbf{L}^{(d)}$ for any depth $d$ is bounded by $B$, the degree of the input TFSDP, so the CSR sparse matrix-vector multiplications in Line~\ref{line:bottom-up-end} is $\Theta(\log{B})$, and, over the for-loop, $\Theta(k\log{B})$.
	Thus, Lines~\ref{line:bottom-up-begin} and~\ref{line:bottom-up-end} have the depth of $\Theta(k\log{B})$.
	The depth of Line~\ref{line:combination} is $\Theta(1)$.
	Finally, the depth of Line~\ref{line:regret-update} is $\Theta(\log{(\max_{j \in \mathcal{J}}{|\mathcal{A}_j|})})$.
	Note that $\max_{j \in \mathcal{J}}{|\mathcal{A}_j|} = O(B)$.
	So, the overall depth is $D = \Theta(k\log{B})$.
\end{proof}

\subsection{Proof of Theorem~\ref{thm:complexity}}
\label{sec:complexity}

\begin{proof}
	Using Lemmas~\ref{lmm:next-strategy-complexity} and~\ref{lmm:observe-utility-complexity}, $W = \Theta(|\mathcal{P}|) + \Theta(|\mathcal{P}|) = \Theta(|\mathcal{P}|)$ and $D = \Theta(\log{(\max_{j \in \mathcal{J}}{|\mathcal{A}_j|})} + k) + \Theta(k\log{B}) = \Theta(k\log{B})$, as required.
\end{proof}

\subsection{Proof of Corollary~\ref{cll:parallelism}}
\label{sec:parallelism}

\begin{proof}
	It follows immediately from Theorem~\ref{thm:complexity} and the definition of parallelism.
\end{proof}

\subsection{Proof of Theorem~\ref{thm:space-complexity}}
\label{sec:space-complexity}

\begin{proof}
	Since all vectors and sparse matrices involved as operands require $\Theta(|\mathcal{P}|)$ space, it suffices to show that every linear algebra operation in Algorithm~\ref{alg:parallelized-cfr} has a linear space complexity.
	The space complexity of sparse matrix-vector multiplication is proportional to the number of non-zeros, and every vector operation in Algorithm~\ref{alg:parallelized-cfr} has linear space complexity, as required.
\end{proof}

\section{Testbench specification}
\label{sec:resources}

Our testbench contains an AMD Ryzen 9 3900X 12-core, 24-thread desktop processor, 128 GB memory, and Nvidia GeForce RTX 4090 24 GB VRAM graphics card, containing 16,384 CUDA cores.

\section{Initialization parameters for battleship}
\label{sec:battleship}

\begin{table}[t!]
	\centering
	\caption{
		Initialization parameters of the four battleship games we used in our benchmarks.
		The values shown in the table correspond exactly to the arguments passed to OpenSpiel.
	}
	\label{tab:battleship}
	\begin{tabular}{l|cc|cc|c}
		\toprule
		\multirow{2}{*}{Size} & \multicolumn{2}{c|}{Board} & \multicolumn{2}{c|}{Ship} & \multirow{2}{*}{\# shots} \\
		& Height & Width & Sizes & Values & \\
		\midrule
		Tiny & 2 & 2 & [2] & [1] & 3 \\
		Small & 3 & 2 & [2] & [1] & 3 \\
		Medium & 4 & 4 & [1] & [1] & 2 \\
		Large & 3 & 3 & [1,2] & [1,1] & 2 \\
		\bottomrule
	\end{tabular}
\end{table}

Four battleship games (plus three others) were used during the benchmarks of our parallelized CFR implementation against OpenSpiel's.
Section~\ref{sec:experiments} relegated the exact parameters used to initialize the four battleship games to this appendix section.
Table~\ref{tab:battleship} tabulates these parameters.

\section{Additional experimental results}
\label{sec:additional}

\begin{figure}[t!]
	\centering
	\includegraphics[width=\linewidth]{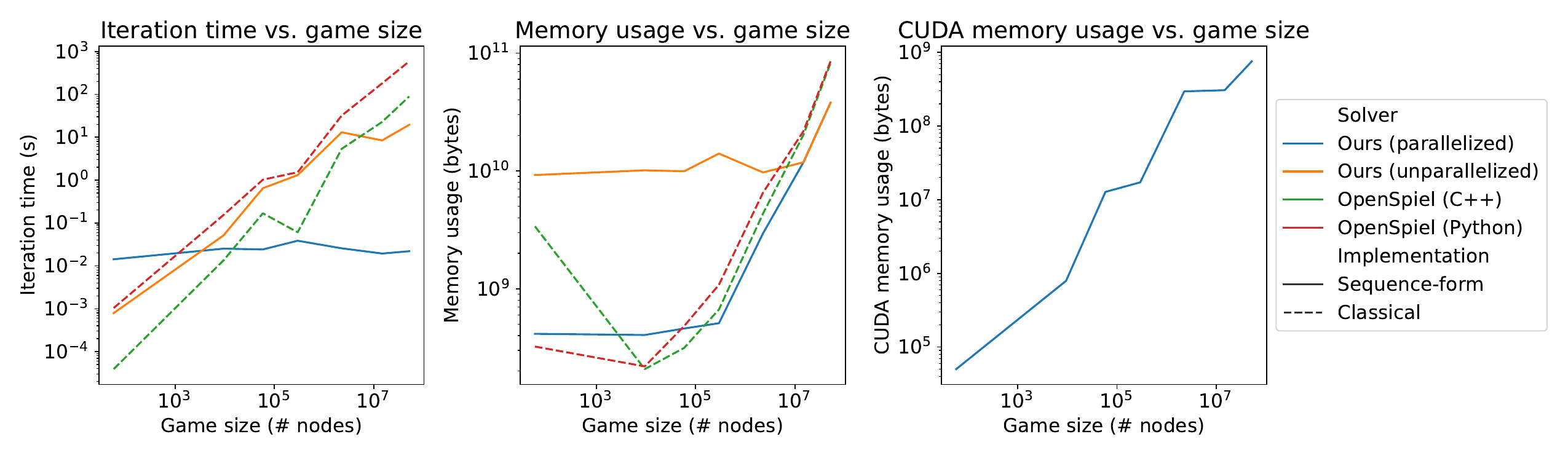}
	\caption{Iteration times and (CUDA) memory usage versus the size of the game being solved.}
	\label{fig:gpugt:plot2}
\end{figure}

\begin{table}[t!]
	\centering
	\caption{
		Iteration runtimes of various CFR implementations.
		For each game, the best performance is bolded.
		Each average iteration time value is accompanied by the standard error of the mean.
	}
	\label{tab:iteration-time}
	\begin{tabular}{l|cccc}
		\toprule
		\multirow{2}{*}{Game} & \multicolumn{4}{c}{Average iteration time} \\
		& \multicolumn{2}{c}{Ours} & \multicolumn{2}{c}{OpenSpiel} \\
		& Parallelized & Unparallelized & C++ & Python \\
		\midrule
		Kuhn poker & $14.3\pm0.1$ \si{\milli\second} & $791\pm1$ \si{\micro\second} & $\mathbf{39\pm1}$ \si{\micro\second} & $1.04\pm0.01$ \si{\milli\second} \\
		Leduc poker & $25.2\pm0.1$ \si{\milli\second} & $51.3\pm0.1$ \si{\milli\second} & $\mathbf{13.4\pm0.1}$ \si{\milli\second} & $154\pm1$ \si{\milli\second} \\
		Tiny battleship & $\mathbf{24.2\pm0.1}$ \si{\milli\second} & $649\pm1$ \si{\milli\second} & $168\pm1$ \si{\milli\second} & $1.03\pm0.01$ \si{\second} \\
		Liar's dice & $\mathbf{38.6\pm0.1}$ \si{\milli\second} & $1.31\pm0.01$ \si{\second} & $60.8\pm0.1$ \si{\milli\second} & $1.53\pm0.01$ \si{\second} \\
		Small battleship & $\mathbf{25.7\pm0.1}$ \si{\milli\second} & $13.1\pm0.1$ \si{\second} & $5.38\pm0.02$ \si{\second} & $32.1\pm0.7$ \si{\second} \\
		Medium battleship & $\mathbf{19.4\pm0.1}$ \si{\milli\second} & $8.45\pm0.04$ \si{\second} & $22.8\pm0.2$ \si{\second} & $181\pm10$ \si{\second} \\
		Large battleship & $\mathbf{22.0\pm0.1}$ \si{\milli\second} & $19.6\pm0.2$ \si{\second} & $89.6\pm4.7$ \si{\second} & $590\pm47$ \si{\second} \\
		\bottomrule
	\end{tabular}
\end{table}

\begin{table}[t!]
	\centering
	\caption{
		On the left, memory usage of the CFR implementations, and on the right, CUDA memory usage of our parallelized implementation.
	}
	\label{tab:memory-usage}
	\resizebox{\linewidth}{!}{
		\begin{tabular}{l|cccc|c}
			\toprule
			\multirow{3}{*}{Game} & \multicolumn{4}{c|}{Memory usage} & \multirow{3}{*}{\makecell{CUDA \\ memory usage}} \\
			& \multicolumn{2}{c}{Ours} & \multicolumn{2}{c|}{OpenSpiel} & \\
			& Parallelized & Unparallelized & \hspace{1.5em} C++ \hspace{1.5em} & \hspace{1em} Python \hspace{1em} & \\
			\midrule
			Kuhn poker & 414 MB & 9.24 GB & 3.39 GB & 323 MB & 50.1 KB \\
			Leduc poker & 405 MB & 10.1 GB & 208 MB & 219 MB & 791 KB \\
			Tiny battleship & 460 MB & 9.94 GB & 315 MB & 483 MB & 12.8 MB \\
			Liar's dice & 510 MB & 14.0 GB & 668 MB & 1.08 GB & 17.2 MB \\
			Small battleship & 2.95 GB & 9.72 GB & 4.34 GB & 6.54 GB & 295 MB \\
			Medium battleship & 11.8 GB & 11.8 GB & 20.2 GB & 21.7 GB & 306 MB \\
			Large battleship & 38.0 GB & 37.9 GB & 82.0 GB & 85.2 GB & 755 MB \\
			\bottomrule
		\end{tabular}
	}
\end{table}

We continue from the remarks we made regarding our experimental results in Section~\ref{sec:experiments}.
Note that the iterates produced by different CFR implementations diverge for some games.
This is primarily due to the following factors: a) regret matching involves mathematical operations with high numerical sensitivity, b) the degree of machine precision differs slightly between a GPU and a CPU, and c) our implementations have different thresholds for checking zero-vectors compared to OpenSpiel's.
This difference is quite striking for the medium battleship game.
While this is not visible in the plot, the peak exploitabilities differ even between the two OpenSpiel implementations by around a factor of three ($0.05$ vs. $0.14$), and the peaks between our implementations also differ by about an order of magnitude.

Figure~\ref{fig:gpugt:plot2} showcases how the iteration times, memory usage, and CUDA memory usage vary as the size of the game being solved increases.
For the iteration times, we again see that our parallelized implementation is slower than the baselines for smaller games, but, as the game size increases, our implementation begins to outperform all other implementations, and the difference grows with the game size.
The performance of our unparallelized implementation sits roughly between those of OpenSpiel's Python and C++ implementations.

The process memory usage and CUDA memory usage (for our parallelized implementation) are tabulated in Table~\ref{tab:memory-usage}.
Note that we kept track of average strategies for a logarithmic number of iterations, so the memory consumed by keeping track of the strategy profiles is also reflected by the memory usage.
Our script's behavior reflects how one would use CFR in real-life applications, as one would typically save the average strategies across some iterations to keep track of the learning process.
Note that our implementations and OpenSpiel represent games and strategy profiles differently, so it is difficult to directly compare the memory usage.
Here, the process memory usage of the implementations ranges from 208 MB to 85.2 GB and is shown to scale with the game size.
Similarly, the (CUDA) memory usage of our parallelized implementation ranges from 50.1 KB to 755 MB and rises roughly linearly with respect to the game size.



\end{document}